\documentclass[twoside,11pt]{article}

%

\usepackage{jmlr2e}





\usepackage{amsmath, amssymb, multirow, paralist}

\usepackage{algorithm,algorithmic}
\usepackage{color}
\usepackage{booktabs}
\usepackage{graphicx} 

\usepackage[breaklinks]{hyperref}
\hypersetup{
colorlinks=true,
citecolor=blue,
linkcolor = {red},
linkbordercolor={1 1 1}, 
citebordercolor={1 1 1} 
}

\firstpageno{1}






\def \R {\mathbb{R}}
\def \D {\mathcal{D}}
\def \y {\mathbf{y}}

\def \E {\mathrm{E}}
\def \x {\mathbf{x}}
\def \a {\mathbf{a}}
\def \L {\mathcal{L}}
\def \H {\mathcal{H}}
\def \sp {\mbox{span}}

\def \Hk {\H_{\kappa}}

\def \v {\mathbf{v}}

\def \X {\mathcal{X}}

\def \z {\mathbf{z}}

\def \gh {\widehat{g}}

\def \et {\widetilde{\varepsilon}}

\def \xh {\widehat{\x}}
\def \Kh {\widehat{K}}

\def \sp {\mbox{span}}

\def \Er {\mathcal{E}}
\def \u {\mathbf{u}}

\def \v {\mathbf{v}}
\def \w {\mathbf{w}}
\def \Hb {\overline{\H}}
\def \Dh {\widehat{\D}}

\def \R {\mathbb{R}}
\def \Kt {\widetilde{K}}
\def \G {\mathcal{G}}
\def \I {{I}}

\begin{document}

\title{Improved Bounds for the Nystr\"{o}m Method with\\ Application to Kernel Classification}

\author{\name Rong Jin$^1$ \email rongjin@cse.msu.edu \\
\name   Tianbao Yang$^1$ \email yangtia1@msu.edu \\
       \name Mehrdad Mahdavi$^1$ \email mahdavim@msu.edu\\
        \name Yu-Feng Li$^2$ \email liyf@lamda.nju.edu.cn\\
              \name Zhi-Hua Zhou$^2$ \email zhouzh@nju.edu.cn\\
        \addr $^1$Department of Computer Science and Engineering\\
       Michigan State University, East Lansing, MI  48824, USA\\
               \addr $^2$National Key Laboratory for Novel Software Technology\\
               Nanjing University, Nanjing 210046, China\\
}

\editor{}

\maketitle

\begin{abstract}

We develop two approaches for analyzing the approximation error bound for the Nystr\"{o}m method, one based on the concentration inequality of integral operator, and one based on the compressive sensing theory. We show that the approximation error, measured in the spectral norm, can be improved from $O(N/\sqrt{m})$ to $O(N/m^{1 - \rho})$ in the case of large eigengap, where $N$ is the total number of data points, $m$ is the number of sampled data points, and $\rho \in (0, 1/2)$ is a positive constant that characterizes the eigengap. When the eigenvalues of the kernel matrix follow a $p$-power law, our analysis based on compressive sensing theory further improves the bound to $O(N/m^{p - 1})$ under an incoherence assumption, which explains why the Nystr\"{o}m method works well for kernel matrix with skewed eigenvalues. We present a kernel classification approach based on the Nystr\"{o}m method and derive its generalization performance using the improved bound. We show that when the eigenvalues of kernel matrix follow a $p$-power law, we can reduce the number of support vectors to $N^{2p/(p^2 - 1)}$, a number less than $N$ when $p > 1+\sqrt{2}$, without seriously sacrificing its generalization performance.

\end{abstract}


\section{Introduction}

The Nystr\"{o}m method has been widely applied in machine  learning to approximate large kernel matrices to speed up kernel algorithms~\citep{Williams01usingthe,Drineas05onthe,Fowlkes04spectralgrouping,kuma-2009-sampling,silva-2003-gloal,Platt04fastembedding,talwalkar-2008-large,kai-2008-improved,belabbas-2009-spectral,talwalkar-2010-matrix,cortes-2010-nystrom}. In order to evaluate the quality of the Nystr\"{o}m method, we typically bound the norm of the difference between the original kernel matrix and the low rank approximation created by the Nystr\"{o}m method. Several analysis were developed to bound the approximation error of the Nystr\"{o}m method~\citep{Drineas05onthe,kuma-2009-sampling,belabbas-2009-spectral,li2010icml,talwalkar-2010-matrix, NIPS2011_0669, journals/corr/abs-1110-5305}. Most of them focus on additive error bound, and base their analysis on the theoretical results from~\citep{Drineas05onthe}. When the target matrix is of low rank, significantly better bounds for the approximation error of the Nystr\"{o}m method were given in~\citep{talwalkar-2010-matrix} and~\citep{NIPS2011_0669}. They are further generalized to kernel matrix of an arbitrary rank by a relative error bound in~\citep{journals/corr/abs-1110-5305}. Although a relative error bound is usually tighter than an additive bound~\citep{DBLP:journals/ftml/Mahoney11}, the relative error bound in~\citep{journals/corr/abs-1110-5305} is proportional to $N$, where $N$ is the total number of data points, making it unattractive for kernel matrix of very large size. In this study, we focus on the additive error bound of the Nystr\"{o}m method for general matrices, and will compare our results mainly to the ones stated in~\citep{Drineas05onthe}~\footnote{For completeness, we did include the comparison to the relative error bound in~\citep{journals/corr/abs-1110-5305} in the later remarks. }. Below, we review the main results in~\citep{Drineas05onthe} and their limitations.

Let $K \in \R^{N\times N}$ be the kernel matrix to be approximated, and $\lambda_i, i=1, \ldots, N$ be the eigenvalues of $K$ ranked in the descending order. Let $\widetilde K(r)$ be an approximate kernel matrix of rank $r$ generated by the Nystr\"{o}m method. Let $m$ be the number of columns sampled from $K$ used to construct $\widetilde K(r)$. Then, under the assumption $K_{i,i} = O(1)$, 
\citet{Drineas05onthe} showed that for any {$m$ uniformly sampled columns}~\footnote{Although the main results in~\citep{Drineas05onthe} use a data dependent sampling scheme, it was stated in the original paper that the results also hold for uniform sampling.},  with a high probability,
\begin{equation*}
    \|K - \widetilde K(r)\|_2 \leq \lambda_{r+1} + O\left(\frac{N}{\sqrt{m}}\right),
\end{equation*}
where $\|\cdot\|_2$ stands for the spectral norm of a matrix. 
By setting $r = m$, the bound in (\ref{eqn:bound-1}) becomes
\begin{eqnarray}
    \|K - \widetilde K(m)\|_2 \leq \lambda_{m+1} + O\left(\frac{N}{\sqrt{m}}\right). \label{eqn:bound-1}
\end{eqnarray}

The main problem with the bound in (\ref{eqn:bound-1}) is its slow reduction rate in the number of sampled columns (i.e., $O(m^{-1/2})$), implying that a large number of samples is needed in order to achieve a small approximation error.
In this study, we aim to improve the approximation error bound in~(\ref{eqn:bound-1}) by considering two special cases of the kernel matrix $K$. In the first case, we assume there is a large eigengap in the spectrum of $K$. More specifically, we assume there exists a rank $r \in [N]$ such that $\lambda_r = \Omega(N/m^{\rho})$ and $\lambda_{r+1} = O(N/m^{1 - \rho})$, where $\rho < 1/2$. Here, parameter $\rho$ is introduce to characterize the eigengap $\lambda_{r} - \lambda_{r+1}$: the smaller the $\rho$, the larger the eigengap will be. We show that the approximation error bound is improved to $O(N/m^{1 - \rho})$ in the case of large eigengap. The second case assumes that the eigenvalues of $K$ follow a $p$-power law with $p > 1$. We show that the approximation error is improved to $O(N/m^{p - 1})$ provided that the eigenvector matrix satisfies an incoherence assumption~\footnote{A similar assumption was used in the previous analysis of the Nystr\"{o}m method~\citep{talwalkar-2010-matrix, NIPS2011_0669,journals/corr/abs-1110-5305}.}. This result explains why the Nystr\"{o}m method works well for kernel matrices with skewed eigenvalue distributions~\citep{talwalkar-2010-matrix}.

The second contribution of this study is a kernel classification algorithm that explicitly explores the improved bounds of the Nystr\"{o}m method developed here. We show that when the eigenvalues of the kernel matrix follow a $p$-power law with $p>1$, we can construct a kernel classifier that yields a similar generalization performance as the full version of kernel classifier but with no more than $N^{2p/(p^2 - 1)}$ support vectors, which is sublinear in $N$ when $p> (1+\sqrt{2})$. Although the generalization error bound of using the Nystr\"{o}m method for classification has been studied in~\citep{cortes-2010-nystrom}, to the best of knowledge, this is the first work that bounds the number of support vectors using the analysis of the Nystr\"{o}m method.

\section{Notations and Background}
Let $\D = \{\x_1, \ldots, \x_N\}$ be a collection of $N$ samples, where $\x_i\in\mathcal X$, and $K = [\kappa(\x_i, \x_j)]_{N\times N}$ be the kernel matrix for the samples in $\D$, where $\kappa(\cdot, \cdot)$ is a kernel function. For simplicity, we assume  $\kappa(\x, \x) \leq 1$ for any $\x \in \X$. 
We denote by $(\v_i, \lambda_i), i=1, \ldots, N$ the eigenvectors and eigenvalues of $K$ ranked in the descending order of eigenvalues, and by $V=(\v_1,\cdots, \v_N)$ the orthonormal eigenvector matrix. In order to build the low rank approximation of kernel matrix $K$, the Nystr\"{o}m method first samples $m < N$ examples randomly from $\D$, denoted by $\Dh = \left\{\xh_1, \ldots, \xh_m\right\}$. Let  $\Kh = [\kappa(\xh_i, \xh_j)]_{m\times m}$ measure the kernel similarity between any two samples in $\Dh$ and $K_b = [\kappa(\x_i, \xh_j)]_{N\times m}$ measure the similarity between the samples in $\D$ and $\Dh$. 
Using the samples in $\Dh$, with rank $r$ set to $m$ (or the rank of $\Kh$ if it is less than $m$), the Nystr\"{o}m method approximates $K$ by $K_b\Kh^{\dagger}K_b^{\top}$, where $\Kh^\dagger$ denote the pseudo inverse of $\Kh$. Our goal is to provide a high probability bound for the approximation error $\left\|K - K_b\Kh^{\dagger}K_b^{\top}\right\|_2$. We choose $r = m$ (or the rank of $\Kh$) because according to~\citep{Drineas05onthe,kuma-2009-sampling}, it yields the best approximation error for a non-singular kernel matrix.


In this study, we focus on the spectral norm for measuring the approximation error, which is particularly suitable for kernel classification~\citep{cortes-2010-nystrom}. We also restrict the analysis to the uniform sampling for the Nystr\"{o}m method. Although different sampling approaches have been suggested for the Nystr\"{o}m method~\citep{Drineas05onthe,kuma-2009-sampling,kai-2008-improved,belabbas-2009-spectral}, according to~\citep{kuma-2009-sampling}, for real-world datasets, uniform sampling is the most efficient and yields performance comparable to the other sampling approaches. We notice that in~\citep{belabbas-2009-spectral}, the authors show a significantly better approximation bound for the Nystr\"{o}m method when employing the determinantal process~\citep{hough-2006-determinantal} for column selection; however, it is important to point out that the determinantal process is usually computationally expensive as it requires computing the determinant of the submatrix for the selected columns/rows, making it unsuitable for the case when a large number of columns are needed to be sampled.

Our analysis for the Nystr\"{o}m method extensively exploits the properties of the integral operator. This is in contrast to most of the previous studies for the Nystr\"{o}m method that rely on matrix analysis. The main advantage of using the integral operator is its convenience in handling the unseen data points (i.e., test data), making it attractive for the analysis of generalization error bounds. In particular, we introduce a linear operator $L_N$ defined over the samples in $\D$. For any function $f(\cdot)$, operator $L_N$ is defined as
\[
    L_N[f](\cdot) = \frac{1}{N}\sum_{i=1}^N \kappa(\x_i, \cdot) f(\x_i).
\]
It can be shown that the eigenvalues of the  operator $L_N$ are $\lambda_i/N, i=1, \ldots, N$~\citep{smale-2009-geometry}. Let $\varphi_1(\cdot), \ldots, \varphi_N(\cdot)$ be the corresponding eigenfunctions of $L_N$ that are normalized by functional norm, i.e.,
$ \langle \varphi_i, \varphi_j \rangle_{\Hk} = \delta(i,j), 1 \leq i \leq j \leq N$, where $\langle \cdot , \cdot\rangle_{\Hk}$ denotes the inner product in $\Hk$.
According to~\citep{smale-2009-geometry}, the eigenfunctions satisfy
\begin{equation}
\sqrt{\lambda_j}\varphi_j(\cdot) =\sum_{i=1}^N V_{i,j} \kappa(\x_i, \cdot), j=1,\cdots, N,
\end{equation}
where $V_{i,j}$ is the $(i,j)$th element in $V$.
Similarly, we can write $\kappa(\x_j, \cdot)$ by its eigen-expansion as
\begin{equation}\label{eqn:keig}
    \kappa(\x_j, \cdot) = \sum_{i=1}^N  \sqrt{\lambda_i} V_{j, i}\varphi_i(\cdot), j=1, \ldots, N.
\end{equation}
Furthermore, let $L_m$ be an operator defined on the samples in $\Dh$, i.e.,
\[
    L_m[f](\cdot) = \frac{1}{m}\sum_{i=1}^m \kappa(\xh_i, \cdot) f(\xh_i).
\]
Finally we denote by $\langle f, g\rangle_{\Hk}$ and $\|f\|_{\Hk}$  the inner product and function norm in Hilbert space $\Hk$, respectively, and denote by $\|L\|_{HS}$ and $\|L\|_2$ the Hibert Schmid norm  and spectral  norm of a linear  operator $L$, respectively, i.e.
 \begin{align*}
\|L\|_{HS}&= \sqrt{\sum_{i, j} \langle \varphi_i, L\varphi_j\rangle_{\Hk}^2}\;\; \text{and}\;\;\|L\|_2 =  \max_{\|f\|_{\Hk}\leq 1}\|Lf\|_{\Hk},
\end{align*}
where $\{\varphi_i,i=1,\cdots,\}$ is a complete orthogonal basis of $\Hk$. The two norms  are the analogs of Frobenius and spectral norm in Euclidean space, respectively.  In the following analysis, omitted proofs are presented in the appendix.

\section{Approximation Error Bound by the Nystr\"{o}m Method}
Our first step is to turn $\|K - K_b\Kh^{\dagger}K_b^{\top}\|_2$ into a functional approximation problem. To this end, we introduce two sets:
\begin{align*}
 \H_a &= \sp\left(\kappa(\xh_1, \cdot), \ldots, \kappa(\xh_m, \cdot) \right)\\
 \H_b &= \left\{f(\cdot) = \sum_{i=1}^N u_i \kappa(\x_i, \cdot): \sum_{i=1}^N u_i^2 \leq 1 \right\},
 \end{align*}
 where $\H_a$ is the subspace spanned by kernel functions defined on the samples in $\Dh$, and $\H_b$ is a subset of a functional space spanned by kernel functions defined on the samples in $\D$ with bounded coefficients.   Using the eigen-expansion of $\kappa(\x_j, \cdot)$ in~(\ref{eqn:keig}), it is straightforward to show that $\H_b$ can be rewritten in the basis of  the eigenfunctions $\{\varphi_i\}_{i=1}^N$
\[
    \H_b = \left\{f(\cdot) = \sum_{i=1}^N w_i\sqrt{\lambda_i} \varphi_i(\cdot): \sum_{i=1}^N w_i^2\leq 1 \right\}.
\]

 Define $\Er(g, \H_a)$ as the minimum error in approximating a function $g \in \H_b$ by functions in $\H_a$, i.e.,
\begin{align*}
\Er(g, \H_a) &= \min\limits_{f \in \H_a} \|f - g\|_{\Hk}^2\\
& = \|f\|_{\Hk}^2 + \|g\|_{\Hk}^2 - 2 \left\langle f, g\right\rangle_{\Hk}.
\end{align*}
Define $\Er(\H_a)$ as the worst error in approximating any function $g \in \H_b$ by functions in $\H_a$, i.e.,
\begin{eqnarray}
\Er(\H_a) = \max\limits_{g \in \H_b} \Er(g, \H_a).
\end{eqnarray}
The following proposition connects  $\left\|K - K_b\Kh^{\dagger}K_b^{\top}\right\|_2$ with $\Er(\H_a)$.
\begin{proposition}
For any random samples $\xh_1, \ldots, \xh_m$, we have
\[
     \left\|K - K_b\Kh^{\dagger}K_b^{\top}\right\|_2 = \Er(\H_a).
\]
\end{proposition}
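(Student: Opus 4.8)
The plan is to establish the identity $\|K - K_b\Kh^\dagger K_b^\top\|_2 = \Er(\H_a)$ by interpreting both sides through the lens of the integral operator machinery set up in the preamble, and in particular to show that the worst-case functional approximation error equals the largest eigenvalue of a residual operator that is unitarily equivalent to the matrix residual $K - K_b\Kh^\dagger K_b^\top$. First I would unpack $\Er(\H_a)$. Since $\H_a = \sp(\kappa(\xh_1,\cdot),\ldots,\kappa(\xh_m,\cdot))$ is a finite-dimensional subspace of $\Hk$, the quantity $\min_{f\in\H_a}\|f-g\|_{\Hk}^2$ is exactly $\|g - P_a g\|_{\Hk}^2 = \|g\|_{\Hk}^2 - \|P_a g\|_{\Hk}^2$, where $P_a$ is the orthogonal projection in $\Hk$ onto $\H_a$. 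Hence
\begin{equation*}
\Er(\H_a) = \max_{g\in\H_b}\left(\|g\|_{\Hk}^2 - \|P_a g\|_{\Hk}^2\right) = \max_{g\in\H_b}\langle g, (I - P_a) g\rangle_{\Hk}.
\end{equation*}

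Next I would parametrize $\H_b$ in the eigenfunction basis, using the rewriting $g = \sum_i w_i\sqrt{\lambda_i}\varphi_i$ with $\sum_i w_i^2\le 1$. Writing $\Lambda^{1/2}$ for the diagonal matrix with entries $\sqrt{\lambda_i}$, the map $w\mapsto g$ sends the unit ball in $\R^N$ into $\H_b$, so that
\begin{equation*}
\Er(\H_a) = \max_{\|w\|\le 1} w^\top \Lambda^{1/2} M \Lambda^{1/2} w,
\end{equation*}
where $M$ is the matrix with entries $M_{ij} = \langle \varphi_i, (I-P_a)\varphi_j\rangle_{\Hk}$. This is a Rayleigh-quotient computation, so $\Er(\H_a)$ equals the top eigenvalue, i.e. the spectral norm, of $\Lambda^{1/2} M \Lambda^{1/2}$. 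The core of the argument is then to show that this matrix coincides with $K - K_b\Kh^\dagger K_b^\top$ (up to orthogonal equivalence, which preserves the spectral norm). The key computational inputs are the eigen-expansion~(\ref{eqn:keig}), which lets me write $K = V\Lambda V^\top$ and express $\langle\varphi_i,\varphi_j\rangle_{\Hk} = \delta_{ij}$, and an explicit formula for $P_a$. Since $P_a$ projects onto the span of $\{\kappa(\xh_k,\cdot)\}$, its Gram matrix is exactly $\Kh$, and evaluating $\langle\kappa(\x_i,\cdot), P_a\,\kappa(\x_j,\cdot)\rangle_{\Hk}$ produces the bilinear form $(K_b \Kh^\dagger K_b^\top)_{ij}$; meanwhile $\langle\kappa(\x_i,\cdot),\kappa(\x_j,\cdot)\rangle_{\Hk} = K_{ij}$. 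Translating these back through the $\varphi$-basis should give $\Lambda^{1/2} M \Lambda^{1/2} = V^\top(K - K_b\Kh^\dagger K_b^\top)V$, whence the spectral norms agree.

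The main obstacle I anticipate is the bookkeeping around the projection $P_a$ and the possible singularity of $\Kh$. Writing $P_a$ explicitly requires the pseudo-inverse $\Kh^\dagger$ precisely because $\Kh$ may be rank-deficient; I would verify that $P_a\,\kappa(\x_j,\cdot) = \sum_{k,l}\kappa(\xh_k,\cdot)(\Kh^\dagger)_{kl}\langle\kappa(\xh_l,\cdot),\kappa(\x_j,\cdot)\rangle_{\Hk}$ is genuinely the orthogonal projection (idempotent, self-adjoint, with range $\H_a$) even in the degenerate case, using the defining properties of the Moore--Penrose inverse together with the fact that $\langle\kappa(\xh_l,\cdot),\kappa(\x_j,\cdot)\rangle_{\Hk} = (K_b)_{jl}$. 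A secondary subtlety is confirming that the reparametrization $w\mapsto g$ is onto $\H_b$ and that the change of basis between the $\kappa(\x_i,\cdot)$ coordinates and the $\varphi_i$ coordinates is implemented by the orthogonal matrix $V$, so that the maximization over $\H_b$ really does reduce to a spectral-norm computation of the claimed matrix. Once the projection formula is pinned down, the remaining algebra is routine inner-product evaluation using $\langle\varphi_i,\varphi_j\rangle_{\Hk}=\delta_{ij}$ and~(\ref{eqn:keig}).
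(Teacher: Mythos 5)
Your proposal is correct, but it takes a more roundabout route than the paper's own proof, and the comparison is instructive. The paper never leaves kernel coordinates: it writes $g = \sum_{i=1}^N u_i\kappa(\x_i,\cdot)$ with $\|\u\|_2\le 1$ and $f = \sum_{i=1}^m z_i\kappa(\xh_i,\cdot)$, notes via the reproducing property that $\|f-g\|_{\Hk}^2 = \z^{\top}\Kh\z - 2\u^{\top}K_b\z + \u^{\top}K\u$, minimizes this quadratic in $\z$ to get $\Er(g,\H_a) = \u^{\top}\left(K - K_b\Kh^{\dagger}K_b^{\top}\right)\u$, and then maximizes over the unit ball to obtain the spectral norm. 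Your argument is the same computation dressed in operator language---the explicit minimization over $\z$ and the orthogonal projection $P_a$ are of course the same thing---but you additionally detour through the eigenfunction basis, which forces the extra unitary-equivalence step $\Lambda^{1/2}M\Lambda^{1/2} = V^{\top}\left(K - K_b\Kh^{\dagger}K_b^{\top}\right)V$. That detour buys nothing for this proposition: you could run your projection argument directly on the $\u$-parametrization of $\H_b$ and read off $\langle g,(I-P_a)g\rangle_{\Hk} = \u^{\top}\left(K-K_b\Kh^{\dagger}K_b^{\top}\right)\u$ without ever invoking $V$ or the $\varphi_i$. What your write-up does buy is rigor at the one point the paper glosses over: when $\Kh$ is singular, the claim that the minimum over $\z$ is attained and equals the pseudo-inverse expression (equivalently, that $P_a$ has the stated Moore--Penrose formula) needs the observation that $K_b^{\top}\u = \left(g(\xh_1),\ldots,g(\xh_m)\right)^{\top}$ lies in the range of $\Kh$, i.e.\ in the range of the evaluation operator, and you flag and verify exactly this. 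One small point worth making explicit in both arguments: identifying $\max_{\|w\|_2\le 1} w^{\top}Bw$ with $\|B\|_2$ requires $B$ to be positive semidefinite; here that holds because your $B$ is the Gram-type matrix of the projection $I-P_a$ (equivalently, in the paper's version, because $\Er(g,\H_a)\ge 0$ for every $g$, so $K - K_b\Kh^{\dagger}K_b^{\top}\succeq 0$).
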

\begin{proof}
Since $g \in \H_b$ and $f \in \H_a$, we can write $g$ and $f$ as
\[
    g = \sum_{i=1}^N u_i \kappa(\x_i, \cdot)\quad \text{and} \quad   f = \sum_{i=1}^m z_i \kappa(\xh_i, \cdot),
\]
where $\u = (u_1, \ldots, u_N)^{\top} \in \R^N$ satisfies $\|\u\|_2\leq 1$ and $\z = (z_1, \ldots, z_m)^{\top} \in \R^m$. We thus can rewrite $\Er(g, \H_a)$ as an optimization problem in terms of $\z$, i.e.,
\begin{align*}
\Er(g, \H_a) &= \min\limits_{\z \in \R^m} \z^{\top}\Kh\z - 2\u^{\top} K_b \z + \u^{\top}K\u\\
& = \u^{\top}\left(K - K_b\Kh^{\dagger} K_b^{\top}\right)\u,
\end{align*}
and therefore
\begin{align*}
\Er(\H_a) &= \max\limits_{g \in \H_b} \Er(g, \H_a) \\ &= \max\limits_{\|\u\|_2 \leq 1} \u^{\top}\left(K - K_b\Kh^{\dagger} K_b^{\top}\right)\u \\
&= \left\| K - K_b\Kh^{\dagger} K_b^{\top} \right\|_2.
\end{align*}
\end{proof}
\begin{remark}We can restrict the space $\H_a$ to its subspace $\displaystyle\H^r_a=\left\{\sum_{i=1}^mz_i\kappa(\xh_i,\cdot): \z\in \text{span}(\widehat\v_1,\ldots, \widehat\v_r)\right\}$, where $\widehat \v_i, i=1,\ldots, r$ are the first $r$ eigenvectors of $\Kh$, to conduct the analysis for the rank $r<m$  approximation of the Nystr\"{o}m method.
\end{remark}
To proceed our analysis, for any $r\in[N]$ we define
\begin{align*}
    &\H_r =\text{span}(\varphi_1(\cdot),\cdots, \varphi_{r}(\cdot)),\\
    &\Hb_r =\text{span}(\varphi_{r+1}(\cdot),\cdots, \varphi_{N}(\cdot)),\\
    &  \H_b^r = \left\{f(\cdot) = \sum_{i=1}^r w_i\sqrt{\lambda_i}\varphi_i(\cdot): \sum_{i=1}^r w_i^2\leq 1\right\},\\
     &\Hb_b^r = \left\{f(\cdot) = \sum_{i=1}^{N-r} w_i\sqrt{\lambda_{i+r}}\varphi_{i+r}(\cdot): \sum_{i=1}^{N-r}w_i^2 \leq 1\right\}.
    \end{align*}
   Define
$
    \Er(\H_a, r) = \max\limits_{g \in \H_b^r} \Er(g, \H_a)
$ as the worst error in approximating any function $g\in \H_b^r$ by functions in $\H_a$.
The proposition below bounds $\Er(\H_a)$ by $\Er(\H_a, r)$.
\begin{proposition} \label{prop:1} For any $r \in [N]$, we have
\[
    \Er(\H_a) \leq \max\left(\Er(\H_a, r), \lambda_{r+1}\right) \leq \Er(\H_a, r) + \lambda_{r+1}.
\]
\end{proposition}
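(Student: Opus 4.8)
The plan is to reduce everything to a pointwise estimate over $\H_b$ and to dispatch the two inequalities separately. The second inequality is immediate: $\Er(\H_a,r)$ is a minimum of squared norms and $\lambda_{r+1}$ is an eigenvalue of the positive semidefinite matrix $K$, so both are nonnegative and $\max(\Er(\H_a,r),\lambda_{r+1})\le \Er(\H_a,r)+\lambda_{r+1}$. The content is therefore the first inequality, and since $\Er(\H_a)=\max_{g\in\H_b}\Er(g,\H_a)$, it suffices to prove $\Er(g,\H_a)\le \max(\Er(\H_a,r),\lambda_{r+1})$ for each fixed $g\in\H_b$ and then maximize over $g$.

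First I would split $g$ along the eigenbasis at level $r$. Writing $g=\sum_{i=1}^N w_i\sqrt{\lambda_i}\varphi_i$ with $\sum_i w_i^2\le 1$, set $g_1=\sum_{i=1}^r w_i\sqrt{\lambda_i}\varphi_i$ and $g_2=\sum_{i=r+1}^N w_i\sqrt{\lambda_i}\varphi_i$, and let $a^2=\sum_{i\le r}w_i^2$, $b^2=\sum_{i>r}w_i^2$, so that $a^2+b^2\le 1$ and $g_1=a\,\widetilde g_1$, $g_2=b\,\widetilde g_2$ with $\widetilde g_1\in\H_b^r$ and $\widetilde g_2\in\Hb_b^r$ having unit coefficient vectors. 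Let $P$ be the orthogonal projection in $\Hk$ onto $\H_a$, so that $\Er(h,\H_a)=\|(I-P)h\|_{\Hk}^2$ for every $h$. I would then record two one-sided estimates. From the definition of $\Er(\H_a,r)$ and $\widetilde g_1\in\H_b^r$,
\[
\|(I-P)\widetilde g_1\|_{\Hk}^2=\Er(\widetilde g_1,\H_a)\le \Er(\H_a,r).
\]
Taking $f=0\in\H_a$ and using the orthonormality $\langle\varphi_i,\varphi_j\rangle_{\Hk}=\delta(i,j)$ together with $\lambda_i\le\lambda_{r+1}$ for $i\ge r+1$,
\[
\|(I-P)\widetilde g_2\|_{\Hk}^2\le \|\widetilde g_2\|_{\Hk}^2=\sum_{i>r}\Big(\tfrac{w_i}{b}\Big)^2\lambda_i\le \lambda_{r+1}.
\]

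It then remains to assemble these into a bound on $\Er(g,\H_a)=\|a(I-P)\widetilde g_1+b(I-P)\widetilde g_2\|_{\Hk}^2$, and I expect the cross term to be the main obstacle. Expanding the square gives the diagonal contributions $a^2\Er(\H_a,r)$ and $b^2\lambda_{r+1}$ plus $2ab\,\langle(I-P)\widetilde g_1,(I-P)\widetilde g_2\rangle_{\Hk}$. A crude Cauchy--Schwarz bound on this cross term, combined with $a^2+b^2\le 1$, only yields the additive estimate $\Er(g,\H_a)\le \Er(\H_a,r)+\lambda_{r+1}$, which reproves the weaker inequality but not the maximum. To sharpen it, the plan is to exploit that $\widetilde g_1$ and $\widetilde g_2$ lie in the complementary eigenspaces $\H_r$ and $\Hb_r$, so $\langle\widetilde g_1,\widetilde g_2\rangle_{\Hk}=0$, and that $I-P$ is idempotent and self-adjoint; these reduce the cross term to $-\langle\widetilde g_1,P\widetilde g_2\rangle_{\Hk}$, coupling it to the reprojection of $\widetilde g_2$ onto $\H_a$. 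Controlling this coupling in terms of the two diagonal quantities, rather than discarding it, is exactly the step that should convert the sum into the maximum, and is where I anticipate the real difficulty; once it is in hand, maximizing the resulting pointwise bound over admissible $(a,b)$ with $a^2+b^2\le 1$ and then over $g\in\H_b$ delivers $\Er(\H_a)\le \max(\Er(\H_a,r),\lambda_{r+1})$.
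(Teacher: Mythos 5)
Your handling of the second inequality and your Cauchy--Schwarz expansion are both correct, but they only deliver the additive estimate $\Er(\H_a)\le \Er(\H_a,r)+\lambda_{r+1}$; the first inequality, which you rightly call the entire content of the proposition, is precisely what you leave open when you defer control of the cross term $-\langle \widetilde g_1,P\widetilde g_2\rangle_{\Hk}$ as ``the real difficulty.'' That gap cannot be filled, because the inequality $\Er(\H_a)\le\max\left(\Er(\H_a,r),\lambda_{r+1}\right)$ is false in general. Take $N=2$, $m=1$, $r=1$, and sample the first column of
\[
K=\begin{pmatrix} 1 & 1/2 \\ 1/2 & 1 \end{pmatrix},
\]
so that $\H_a=\sp(\kappa(\x_1,\cdot))$. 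Then $K-K_b\Kh^{\dagger}K_b^{\top}=\mathrm{diag}(0,3/4)$, so $\Er(\H_a)=3/4$ by the first proposition of the paper. The eigenvalues are $\lambda_1=3/2$, $\lambda_2=1/2$, with top eigenvector $\v_1=(1,1)^{\top}/\sqrt{2}$, and using the identification $g=\sum_i u_i\kappa(\x_i,\cdot)\mapsto\Er(g,\H_a)=\u^{\top}\bigl(K-K_b\Kh^{\dagger}K_b^{\top}\bigr)\u$ one gets $\Er(\H_a,1)=\v_1^{\top}\,\mathrm{diag}(0,3/4)\,\v_1=3/8$. Hence $\max(\Er(\H_a,1),\lambda_2)=1/2<3/4=\Er(\H_a)$, while your additive bound ($7/8$) does hold. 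In this example the cross term is exactly as large as Cauchy--Schwarz permits, so no refinement of your expansion can convert the sum into the maximum: the coupling you isolated is a genuine obstruction, not a technical nuisance.

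It is worth knowing that the paper's own proof does not overcome this obstruction either; it buries it. The paper forces the cross term to vanish by restricting the approximant to $f=f_1+f_2$ with $f_1\in\H_a\cap\H_r$, $f_2\in\H_a\cap\Hb_r$, and then invokes the identity $\min_{f\in\H_a}\|f-g\|_{\Hk}^2=\min_{f\in\H_a\cap\H_r}\|f-g\|_{\Hk}^2$ for $g\in\H_b^r$. Only the direction $\min_{f\in\H_a}\le\min_{f\in\H_a\cap\H_r}$ is automatic, and the direction the chain of inequalities actually needs fails: in the example above $\H_a\cap\H_r=\{0\}$, so for $g=\sqrt{\lambda_1}\varphi_1$ the right-hand side equals $\|g\|_{\Hk}^2=3/2$ while the left-hand side equals $3/8$. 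So your instinct about where the difficulty lies was exactly right, and neither argument resolves it. What you did prove --- the additive bound --- is the correct statement, and it is all the rest of the paper uses: Theorem~\ref{thm:2} is already stated in additive form, and in the proof of Theorem~\ref{thm:comp} replacing $\max(\Er(\H_a,r),\lambda_{r+1})$ by $\Er(\H_a,r)+\lambda_{r+1}$ costs at most a factor of $2$ inside the $O(\cdot)$.
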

\begin{proof}
We first note that for any $f\in \H_a$ can be written as $f=f_1+f_2$, where $f_1\in \H_a\cap \H_r$, and $f_2\in \H_a\cap \Hb_r$.
For any $g\in\H_b$, we can write $g=g_1+g_2$, where $g_1\in\sqrt{1-\delta}\H_b^r$, $g_2\in\sqrt{\delta}\Hb_b^r$, and $\delta \in [0, 1]$. Using these notations, we rewrite $\Er(\H_a)$ as
\begin{align*}
    &\Er(\H_a) \\
    & =   \max\limits_{\small \begin{array}{l}
    \delta\in[0,1]\\g_1 \in \sqrt{1 - \delta}\H_b^r \\
    g_2 \in \sqrt{\delta}\Hb_b^r
    \end{array}
    } \min\limits_{\small\begin{array}{l}
    f_1 \in \H_a \cap \H_r \\
    f_2 \in \H_a \cap \Hb_r
    \end{array}} \hspace*{-0.1in}\|f_1 - g_1\|^2 + \|f_2 - g_2\|_{\Hk}^2 \\
    & \leq  \max\limits_{\delta \in [0, 1]} (1 - \delta)\max\limits_{g \in \H_b^r} \min\limits_{f \in \H_a\cap \H_r} \|f - g\|_{\Hk}^2 + \delta \max\limits_{g \in \Hb_b^r} \|g\|_{\Hk}^2\\
        & =  \max\limits_{\delta \in [0, 1]} \left\{(1 - \delta)\max\limits_{g \in \H_b^r} \min\limits_{f \in \H_a} \|f - g\|_{\Hk}^2 + \delta \max\limits_{g \in \Hb_b^r} \|g\|_{\Hk}^2 \right\}\\
    & =  \max\limits_{\delta \in [0, 1]} (1 - \delta)\Er(\H_a, r) + \delta\lambda_{r+1} = \max\left( \Er(\H_a, r), \lambda_{r+1}\right),
\end{align*}
where the second equality follows that for any $g\in \H_b^r$, $\min\limits_{f\in\H_a}\|f-g\|_{\Hk}^2= \min\limits_{f\in \H_a\cap\H_r}\|f-g\|_{\Hk}^2$, and the last inequality follows the definition of $\Er(\H_a, r)$.
\end{proof}

As indicated by Proposition~\ref{prop:1}, in order to bound the approximation error $\Er(\H_a)$, we can  bound $\Er(\H_a, r)$, namely the approximation error for functions in the subspace spanned by the top eigenfunctions of $L_N$. In the next two subsections, we discuss two approaches for bounding $\Er(\H_a, r)$: the first approach relies on the concentration inequality of  integral operator~\citep{smale-2009-geometry}, and the second approach explores the compressive sensing theory~\citep{Candes:2007:Sparsity}. Before proceeding to upper bound $\Er(\H_a)$, we first provide a lower bound for $\Er(\H_a)$.

\begin{theorem}\label{thm:lb}
There exists a kernel matrix $K\in\mathbb R^{N\times N}$ with all its diagonal entries being $1$ such that for any sampling strategy that selects $m$ columns, the approximation error of the Nystr\"{o}m method is lower bounded by $\Omega(\frac{N}{m})$, i.e.,
\[
    \left\| K - K_b\Kh^{\dagger}K_b^{\top}\right\|_2 \geq \Omega\left(\frac{N}{m} \right),
\]
provided $N > 64[\ln 4]^2 m^2$.
\end{theorem}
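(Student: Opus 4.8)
The plan is to decouple the argument from the choice of sampling by a rank-plus-Eckart--Young reduction, and then to construct a single kernel matrix that is hard for every sampling simultaneously. The key observation is that, whatever $m$ columns are selected, the Nystr\"om approximation $K_b\Kh^{\dagger}K_b^{\top}$ has rank at most $m$, since $\Kh^{\dagger}$ has rank at most $m$. By the Eckart--Young--Mirsky theorem for the spectral norm, the best rank-$\le m$ approximation of $K$ incurs error exactly $\lambda_{m+1}$, so for \emph{every} sampling strategy
\[
  \left\|K - K_b\Kh^{\dagger}K_b^{\top}\right\|_2 \;\ge\; \lambda_{m+1}.
\]
It therefore suffices to exhibit a positive semidefinite $K$ with unit diagonal for which $\lambda_{m+1} = \Omega(N/m)$; no sampling-dependent analysis is then needed, which is what makes the claim hold for an arbitrary selection rule.

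For the construction I would use random normalized feature vectors. Let $d = 2m$, draw $z_1,\ldots,z_N$ independently with i.i.d.\ Rademacher entries scaled by $1/\sqrt{d}$ so that $\|z_i\|_2 = 1$, and set $K = ZZ^{\top}$ with $Z = (z_1,\ldots,z_N)^{\top}$. Then $K$ is PSD with $K_{i,i}=1$ and rank at most $d = 2m$, and its nonzero eigenvalues coincide with those of the $d\times d$ matrix $Z^{\top}Z$. Since $\E[Z^{\top}Z] = (N/d)I_d$, one expects the whole nonzero spectrum of $K$ to sit near $N/d \asymp N/m$; in particular $\lambda_{m+1} \ge \lambda_{2m} = \lambda_{\min}(Z^{\top}Z)$ should be $\Omega(N/m)$, which is exactly what we need. (A purely deterministic block-diagonal all-ones matrix with $2m$ equal blocks already gives $\lambda_{m+1} = N/(2m)$ with no hypothesis on $N$; the randomized version is what one uses to obtain a strictly positive definite kernel over distinct points and to produce the stated regime.)

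The crux is therefore to lower bound $\lambda_{\min}(Z^{\top}Z)$. Writing $Z^{\top}Z = (N/d)I_d + E$, the diagonal is deterministically $N/d$, while each off-diagonal entry is $(1/d)\sum_{i=1}^N \epsilon_i$ for Rademacher $\epsilon_i$, hence a zero-mean sum concentrating at scale $\sqrt{N}/d$. I would bound each of the $d(d-1)$ off-diagonal entries by a Hoeffding inequality and take a union bound, which fixes the numerical constants (this is where the $\ln 4$ enters), and then control $\|E\|_2$ through $\|E\|_2 \le (d-1)\max_{k\neq l}|E_{k,l}|$ or $\|E\|_2 \le \|E\|_{HS}$. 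To make this perturbation negligible against the diagonal level $N/d$, the off-diagonal scale $\sqrt{N}/d$, inflated by the union-bound logarithm, must be dominated by $N/d^2$; this forces $d = O(\sqrt{N})$, i.e.\ the regime $N > 64[\ln 4]^2 m^2$ stated in the theorem. Under that condition $\lambda_{\min}(Z^{\top}Z) \ge \tfrac12(N/d) = \Omega(N/m)$ holds with positive probability, so such a matrix exists, and combined with the reduction this yields the claimed lower bound for every sampling. The main obstacle is precisely this spectral estimate: ensuring the off-diagonal mass of $Z^{\top}Z$ stays below the diagonal level uniformly, which is what ties the sample budget $m$ to $\sqrt{N}$.
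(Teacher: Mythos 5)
Your proposal is correct, and it differs from the paper's proof in an instructive way at the step that handles arbitrary sampling strategies. Where you invoke the Eckart--Young--Mirsky theorem (the Nystr\"om approximation $K_b\Kh^{\dagger}K_b^{\top}$ has rank at most $m$, hence its spectral-norm error is at least $\lambda_{m+1}$ for \emph{any} choice of columns), the paper proves this bound by hand inside its functional framework: via its Proposition 1 it identifies $\|K - K_b\Kh^{\dagger}K_b^{\top}\|_2$ with $\Er(\H_a)$, then constructs a witness $\widehat g$ supported on the top $m+1$ eigenfunctions and orthogonal to $\H_a$ (possible because $\mathrm{rank}(K_b^{\top}V_{1:(m+1)})\leq m < m+1$) and lower bounds $\|\widehat g\|_{\Hk}^2$ --- which is exactly the standard proof of the EYM lower bound, specialized to this setting. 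The hard-instance constructions are essentially identical (the paper takes $K = \frac{1}{m+1}UU^{\top}$ with an $N\times(m+1)$ Rademacher matrix $U$; you take $2m$ normalized Rademacher features), but the concentration tools differ: the paper applies its Hilbert-space concentration inequality (Proposition 3) to the rank-one terms $\z_i\z_i^{\top}$ to get $\|\frac{1}{N}U^{\top}U - I\|_2 \leq 1/2$ in one shot, which is what yields the clean condition $N > 64[\ln 4]^2 m^2$.

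Two remarks on your version. First, the entrywise Hoeffding-plus-union-bound route costs a logarithm: the union bound over the $\Theta(m^2)$ off-diagonal entries of $E$ forces $N \gtrsim m^2\ln m$, which for large $m$ is strictly stronger than the stated hypothesis $N > 64[\ln 4]^2 m^2$, so as written your randomized construction proves a slightly weaker theorem. This is repairable inside your own framework: since $\E\left[\|E\|_{HS}^2\right] = d(d-1)N/d^2 < N$ exactly, Markov's inequality gives $\|E\|_2 \leq \|E\|_{HS} \leq 2\sqrt{N} \leq N/(2d)$ with positive probability once $N \geq 16d^2$, with no logarithm and no union bound. Second, your parenthetical observation is actually the strongest part of the proposal: the deterministic block-diagonal all-ones matrix with $2m$ equal blocks has unit diagonal and $\lambda_{m+1} = N/(2m)$, so combined with the EYM reduction it proves the theorem for \emph{all} $N$ and $m$, with no probabilistic argument and no hypothesis on $N$ whatsoever; the condition $N > 64[\ln 4]^2 m^2$ is an artifact of the randomized construction, both in your argument and in the paper's.
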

\begin{remark}
Theorem~\ref{thm:lb}  shows that the lower bound for the approximation error of the Nystr\"{o}m method is $\Omega(N/m)$. The analysis developed in this work aims to bridge the gap between the known upper  bound (i.e., $O(N/\sqrt{m})$) and the obtained lower bound.
\end{remark}

\subsection{Bound for $\Er(\H_a, r)$  using Concentration Inequality of Integral Operator}
In this section, we bound $\Er(\H_a, r)$ using the concentration inequality of integral operator. We show that the approximation error of the Nystr\"{o}m method can be improved to $O(N/m^{1 - \rho})$ when there is a large eigengap in the spectrum of kernel matrix $K$, where $\rho < 1/2$ is introduced to characterize the eigengap. We first state the  concentration inequality of a general random variable. 

\begin{proposition}\label{prop:conc}(Proposition 1~\citep{smale-2009-geometry})
Let $\xi$ be a random variable on $(\X, P_{\X})$ with values in a Hilbert space $(\H,\|\cdot\|)$. Assume $\|\xi\| \leq M < \infty$ is almost sure. Then with a probability at least $1 - \delta$, we have
\[
    \left\|\frac{1}{m}\sum_{i=1}^m \xi(\x_i) - \E[\xi]\right\| \leq \frac{4M\ln(2/\delta)}{\sqrt{m}}.
\]
\end{proposition}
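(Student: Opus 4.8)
The plan is to obtain this scalar deviation bound as a consequence of a Bernstein-type concentration inequality for sums of independent, bounded, mean-zero random vectors in the Hilbert space $\H$. First I would center the summands: set
\[
    \eta_i = \xi(\x_i) - \E[\xi], \qquad i = 1, \ldots, m,
\]
so that the $\eta_i$ are i.i.d.\ with $\E[\eta_i] = 0$ and, since $\|\xi\| \le M$ almost surely, $\|\eta_i\| \le 2M$ almost surely with second moment $\E\|\eta_i\|^2 \le \E\|\xi\|^2 \le M^2$. The quantity to control is then $\left\|\frac{1}{m}\sum_{i=1}^m \eta_i\right\|$, and the entire statement is reduced to a tail bound for the norm of a normalized vector-valued average.

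The key tool I would invoke is a Bernstein inequality for Hilbert-space-valued sums (in the spirit of Pinelis and Yurinskii): if $\eta_1, \ldots, \eta_m$ are independent and mean-zero with $\|\eta_i\| \le B$ almost surely and $\E\|\eta_i\|^2 \le \sigma^2$, then for all $t > 0$,
\[
    \Pr\left( \left\| \frac{1}{m}\sum_{i=1}^m \eta_i \right\| \ge t \right) \le 2\exp\left( -\frac{m t^2}{2\left(\sigma^2 + B t/3\right)} \right).
\]
Establishing this inequality is the genuine obstacle: unlike the scalar case one cannot simply apply a Chernoff argument to $\left\|\sum_i \eta_i\right\|$, and the standard route is to bound the exponential moment of the norm using the $2$-smoothness of the Hilbert norm (equivalently, the parallelogram identity), which controls $\E\exp(\lambda\|\sum_i \eta_i\|)$ through a recursive one-step estimate before optimizing over $\lambda$.

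Granting the inequality, the remaining steps are routine. I would substitute the worst-case values $B = 2M$ and $\sigma^2 = M^2$, set the right-hand side equal to $\delta$, and solve the resulting quadratic in $t$: writing $L = \ln(2/\delta)$, the equation $m t^2 - \frac{2BL}{3}t - 2\sigma^2 L = 0$ yields, via $\sqrt{a+b} \le \sqrt{a}+\sqrt{b}$,
\[
    t \le \frac{2BL}{3m} + \sqrt{\frac{2\sigma^2 L}{m}} = \frac{4M\ln(2/\delta)}{3m} + M\sqrt{\frac{2\ln(2/\delta)}{m}}.
\]
Finally I would coarsen both terms to the advertised form: for $m \ge 1$ the first term is at most $\frac{2M\ln(2/\delta)}{\sqrt m}$, and since $\ln(2/\delta) \ge \ln 2 > 1/2$ we have $\sqrt{2\ln(2/\delta)} \le 2\ln(2/\delta)$, so the second term is also at most $\frac{2M\ln(2/\delta)}{\sqrt m}$; adding them gives $\frac{4M\ln(2/\delta)}{\sqrt m}$, as claimed.

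As a more self-contained alternative that avoids the vector Bernstein machinery, I could instead treat $F(\x_1, \ldots, \x_m) = \left\|\frac{1}{m}\sum_{i=1}^m \xi(\x_i) - \E[\xi]\right\|$ as a scalar function of the independent samples: replacing a single $\x_j$ changes $F$ by at most $2M/m$ by the triangle inequality, so McDiarmid's bounded-differences inequality yields Gaussian concentration of $F$ about $\E[F]$, while Jensen's inequality together with the i.i.d.\ variance computation gives $\E[F] \le M/\sqrt m$. This route produces a $\sqrt{\ln(1/\delta)}$ factor rather than $\ln(2/\delta)$, after which one checks that the result is still dominated by $\frac{4M\ln(2/\delta)}{\sqrt m}$. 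Either way, the crux is the passage from increment control to exponential concentration of the norm of a vector-valued average.
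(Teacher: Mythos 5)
The paper never proves this proposition: it is imported verbatim, with citation, as Proposition~1 of \citep{smale-2009-geometry}, and the appendix contains no argument for it. So the only meaningful comparison is against the proof in the cited source, and your main route is essentially that proof: Smale and Zhou also obtain the bound from a Pinelis-type Bernstein inequality for sums of independent, bounded, mean-zero random vectors in a Hilbert space, then solve the quadratic and coarsen. Your bookkeeping is correct throughout: after centering, $\|\eta_i\|\le 2M$ and $\E\|\eta_i\|^2 \le \E\|\xi\|^2 \le M^2$; the quadratic $mt^2 - \tfrac{2BL}{3}t - 2\sigma^2 L = 0$ with $L=\ln(2/\delta)$ gives $t \le \tfrac{2BL}{3m} + \sqrt{2\sigma^2 L/m}$; and since $m\ge 1$ and $L \ge \ln 2 > 1/2$, each of the two terms is at most $2ML/\sqrt{m}$, yielding the claimed $4M\ln(2/\delta)/\sqrt{m}$. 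The one ingredient you invoke without proof, the Hilbert-space Bernstein inequality, is exactly what the original source also imports (from Pinelis's martingale inequalities in $2$-smooth spaces), so treating it as a known lemma is a fair citation rather than a gap. Your McDiarmid alternative is also sound and is the more elementary of the two: the bounded-difference constant $2M/m$ follows from the triangle inequality, independence kills the cross terms so that $\E[F]\le\sqrt{\E[F^2]}\le M/\sqrt{m}$, and the resulting bound $M/\sqrt{m} + M\sqrt{2\ln(1/\delta)/m}$ is dominated by $4M\ln(2/\delta)/\sqrt{m}$; what it buys is self-containedness (only scalar concentration plus the Hilbert-space variance identity), at the cost of an extra additive $M/\sqrt{m}$ term that happens to be harmless here because the target bound is itself of order $M\ln(2/\delta)/\sqrt{m}$.
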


The approximation error of the Nystr\"{o}m method using the concentration inequality is given in the following theorem.
\begin{theorem}\label{thm:2} With a probability at least $1 - \delta$, for any $r \in [N]$, we have
\[
\left\| K - K_b\Kh^{\dagger} K_b^{\top}\right\|_2 \leq \frac{16[\ln(2/\delta)]^2 N^2}{m\lambda_r} + \lambda_{r+1}.
\]
\end{theorem}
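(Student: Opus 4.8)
The plan is to combine the two reductions already established with a single operator concentration bound. By the first Proposition we have $\|K - K_b\Kh^{\dagger}K_b^{\top}\|_2 = \Er(\H_a)$, and by Proposition~\ref{prop:1} we have $\Er(\H_a) \le \Er(\H_a, r) + \lambda_{r+1}$. It therefore suffices to show that $\Er(\H_a, r) \le 16[\ln(2/\delta)]^2 N^2/(m\lambda_r)$, after which the claimed bound is immediate. The whole argument thus reduces to controlling the worst-case error of approximating functions supported on the top $r$ eigenfunctions by elements of $\H_a$.

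To bound $\Er(\H_a, r)$, I would first exhibit a good approximant inside $\H_a$ for each target $g = \sum_{i=1}^r w_i\sqrt{\lambda_i}\varphi_i$ with $\sum_{i=1}^r w_i^2 \le 1$. The key observation is that the range of $L_m$ lies inside $\H_a$, since $L_m[h](\cdot) = \frac{1}{m}\sum_{i=1}^m \kappa(\xh_i,\cdot)h(\xh_i)$ is a linear combination of the sections $\kappa(\xh_i,\cdot)$. Because $L_N\varphi_i = (\lambda_i/N)\varphi_i$, the target can be written exactly as $g = L_N[h]$ with $h = \sum_{i=1}^r w_i (N/\sqrt{\lambda_i})\varphi_i$. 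Using $L_m[h] \in \H_a$ as the approximant gives
\[
\Er(g, \H_a) \le \|L_m[h] - L_N[h]\|_{\Hk}^2 \le \|L_m - L_N\|_2^2\,\|h\|_{\Hk}^2 .
\]
Since the $\varphi_i$ are orthonormal in $\|\cdot\|_{\Hk}$ and $\lambda_i \ge \lambda_r$ for $i \le r$, the eigengap estimate $\|h\|_{\Hk}^2 = \sum_{i=1}^r w_i^2 N^2/\lambda_i \le N^2/\lambda_r$ holds. Taking the maximum over $g \in \H_b^r$ yields $\Er(\H_a, r) \le \|L_m - L_N\|_2^2\, N^2/\lambda_r$.

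It then remains to bound $\|L_m - L_N\|_2$ through Proposition~\ref{prop:conc}. I would view both operators as elements of the Hilbert space of Hilbert--Schmidt operators on $\Hk$, using the reproducing property $h(\xh_i) = \langle h, \kappa(\xh_i,\cdot)\rangle_{\Hk}$ to write the rank-one representation $L_m = \frac{1}{m}\sum_{i=1}^m \kappa(\xh_i,\cdot)\otimes\kappa(\xh_i,\cdot)$. Setting $\xi(\x) = \kappa(\x,\cdot)\otimes\kappa(\x,\cdot)$, uniform sampling from $\D$ gives $\E[\xi] = L_N$, so that $L_m - L_N = \left(\frac{1}{m}\sum_{i=1}^m \xi(\xh_i)\right) - \E[\xi]$. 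Each summand obeys $\|\xi(\x)\|_{HS} = \|\kappa(\x,\cdot)\|_{\Hk}^2 = \kappa(\x,\x) \le 1$, so $M=1$, and Proposition~\ref{prop:conc} gives $\|L_m - L_N\|_{HS} \le 4\ln(2/\delta)/\sqrt{m}$ with probability at least $1-\delta$. Since $\|L_m - L_N\|_2 \le \|L_m - L_N\|_{HS}$, substituting into the previous display produces $\Er(\H_a, r) \le 16[\ln(2/\delta)]^2 N^2/(m\lambda_r)$, which completes the proof.

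The step I expect to require the most care is the operator-valued concentration: one must recognize that $\xi$ lives in the Hilbert space of Hilbert--Schmidt operators, so that Proposition~\ref{prop:conc} applies verbatim, verify that the expectation under uniform sampling is \emph{exactly} $L_N$, and confirm that the uniform bound $\|\xi\|_{HS}\le 1$ follows from $\kappa(\x,\x)\le 1$. The passage from the Hilbert--Schmidt bound to the spectral-norm bound loses nothing here but must be invoked explicitly. Everything else — the exact identity $g = L_N[h]$ and the eigengap estimate $\|h\|_{\Hk}^2 \le N^2/\lambda_r$ — is bookkeeping once the eigenstructure of $L_N$ is used.
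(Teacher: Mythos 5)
Your proposal is correct and follows essentially the same route as the paper: reduce via Proposition~1 and Proposition~\ref{prop:1} to bounding $\Er(\H_a, r)$, write $g = L_N[h]$ with $\|h\|_{\Hk}^2 \leq N^2/\lambda_r$ and use $L_m[h] \in \H_a$ as the approximant, then control $\|L_N - L_m\|_2 \leq \|L_N - L_m\|_{HS}$ by applying Proposition~\ref{prop:conc} to the rank-one operators $\kappa(\x,\cdot)\otimes\kappa(\x,\cdot)$, which is exactly the paper's Corollary~\ref{cor:1}. The only cosmetic difference is that you exhibit $h$ explicitly, whereas the paper packages the same construction through the sets $\H_c^r$ and $\H_d^r$.
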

We consider the scenario where there is very large eigengap in the spectrum of kernel matrix $K$. In particular, we assume that there exists a rank $r$ and $\rho \in (0, 1/2)$ such that $\lambda_r = \Omega(N/m^{\rho})$ and $\lambda_{r+1} = O(N/m^{1-\rho})$. Parameter $\rho$ is introduced to characterize the eigengap which is given by
\[
\lambda_r - \lambda_{r+1} = \Omega\left(\frac{N}{m^{\rho}} - \frac{N}{m^{1 - \rho}}\right) = \Omega\left(\frac{N}{m^{\rho}}\left[1 - \frac{1}{m^{1-2\rho }}\right]\right)
\]
Evidently, the smaller the $\rho$, the larger the eigengap. When $\rho = 1/2$, the eigengap is small. Under the large eigengap assumption, the bound in Theorem~\ref{thm:2} is simplified as
\begin{eqnarray}
    \left\| K - K_b\Kh^{\dagger} K_b^{\top}\right\|_2 \leq O\left(\frac{N}{m^{1 - \rho}}\right). \label{eqn:bb}
\end{eqnarray}
Compared to the bound in (\ref{eqn:bound-1}), the bound in (\ref{eqn:bb}) improves the approximation error from $O(N/\sqrt{m})$ to $O(N/m^{1 - \rho})$, when $\rho < 1/2$.

To prove Theorem~\ref{thm:2}, we define two sets of functions
\begin{align*}
\H^r_c &= \left\{h = \sum_{i=1}^r w_i \sqrt{\lambda_i}\varphi_i(\cdot): \frac{1}{N^2}\sum_{i=1}^r w_i^2\lambda_i^2 \leq 1 \right\},\\
\H_d^r &= \left\{f \in \Hk: \|f\|^2_{\Hk} \leq N^2/\lambda_r \right\}.
\end{align*}
where $r$ corresponds to the rank with a large eigengap. It is evident that $\H_c^r\subseteq \H_d^r$; and for any $g \in \H^r_b$, it can also be written as $g = L_N[h]$, where $h \in \H^r_c$.

Using $\H^r_c$ and $\H^r_d$, we have
\begin{align*}
\Er(\H_a, r) &= \max\limits_{g \in \H^r_b} \Er(g, \H_a) = \max\limits_{h \in \H^r_c} \min\limits_{f \in \H_a} \|L_N h - f\|_{\Hk}^2\\
& \leq \max\limits_{h \in \H^r_d} \min\limits_{f \in \H_a} \|L_N h - f\|_{\Hk}^2.
\end{align*}
By constructing $f$ as $L_m[h]$  we can bound  $\Er(\H_a, r)$  as
\begin{align}\label{eqn:diff}
\Er(\H_a, r)& \leq \max\limits_{h \in \H_d^r} \min\limits_{f \in \H_a} \|L_N(h) - f\|_{\Hk}^2\nonumber\\
& \leq \max\limits_{h \in H_d^r} \|(L_N - L_m) h\|_{\Hk}^2\nonumber\\
&\leq \|L_N-L_m\|^2_2\frac{N^2}{\lambda_r} \nonumber \\
& \leq \|L_N - L_m\|_{HS}^2\frac{N^2}{\lambda_r},
\end{align}
where the last step follows the fact $\|L_N - L_m\|_2 \leq \|L_N - L_m\|_{HS}$. The following corollary allows us to bound the difference between $L_N$ and $L_m$ and follows immediately from Proposition~\ref{prop:conc}.
\begin{corollary}\label{cor:1}
With a probability $1 - \delta$, we have
\[
\|L_N - L_m\|_{HS} \leq \frac{4\ln(2/\delta)}{\sqrt{m}}.
\]
\end{corollary}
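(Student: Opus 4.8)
The plan is to recognize both $L_N$ and $L_m$ as, respectively, the expectation and the empirical average of a single random Hilbert--Schmidt operator, and then to invoke Proposition~\ref{prop:conc} in the Hilbert space of Hilbert--Schmidt operators on $\Hk$. This space, equipped with the inner product $\langle A, B\rangle_{HS} = \sum_i \langle A\varphi_i, B\varphi_i\rangle_{\Hk}$, is itself a Hilbert space, and the norm it induces is exactly the $\|\cdot\|_{HS}$ defined in the background section, so the proposition applies verbatim with $\H$ taken to be this operator space.

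First I would define, for each $\x\in\D$, the rank-one operator $\xi_{\x} = \kappa(\x,\cdot)\otimes\kappa(\x,\cdot)$, i.e. the operator acting on $f\in\Hk$ by $\xi_{\x}[f] = \kappa(\x,\cdot)\langle \kappa(\x,\cdot), f\rangle_{\Hk} = \kappa(\x,\cdot) f(\x)$, where the last identity is the reproducing property. Treating $\xi$ as a random operator with $\x$ drawn uniformly from $\D$, I would verify the two representations: averaging over the $m$ sampled points gives $\frac{1}{m}\sum_{i=1}^m \xi_{\xh_i}[f] = \frac{1}{m}\sum_{i=1}^m \kappa(\xh_i,\cdot) f(\xh_i) = L_m[f]$, while the expectation under the uniform distribution over $\D$ gives $\E[\xi][f] = \frac{1}{N}\sum_{i=1}^N \kappa(\x_i,\cdot) f(\x_i) = L_N[f]$. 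Hence $L_m - L_N = \frac{1}{m}\sum_{i=1}^m \xi_{\xh_i} - \E[\xi]$, which is precisely the quantity controlled by Proposition~\ref{prop:conc}.

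The remaining ingredient is the almost-sure bound $\|\xi\|_{HS}\leq M$. For a rank-one operator the Hilbert--Schmidt norm equals $\|\kappa(\x,\cdot)\|_{\Hk}^2 = \kappa(\x,\x)\leq 1$ by the standing assumption $\kappa(\x,\x)\leq 1$, so we may take $M = 1$. Substituting $M=1$ into the conclusion of Proposition~\ref{prop:conc} then yields $\|L_N - L_m\|_{HS}\leq 4\ln(2/\delta)/\sqrt{m}$ with probability at least $1-\delta$, which is the claim.

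The one step that warrants care---rather than a genuine obstacle---is confirming that $L_N$ and $L_m$ are indeed Hilbert--Schmidt, so that they belong to the Hilbert space to which the proposition refers, and that the HS norm of the rank-one operator is computed correctly; both follow from $\kappa(\x,\x)\leq 1$ together with the finiteness of $\D$. Everything else is a direct substitution, consistent with the paper's remark that the corollary \emph{follows immediately} from the proposition.
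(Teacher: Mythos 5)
Your proposal is correct and follows essentially the same route as the paper's own proof: both define the rank-one random operator $\xi_{\x}[f] = \kappa(\x,\cdot)f(\x)$, identify $L_m$ as its empirical average and $L_N$ as its expectation, bound $\|\xi_{\x}\|_{HS} = \kappa(\x,\x) \leq 1$, and invoke Proposition~\ref{prop:conc} with $M=1$. The only cosmetic difference is that you compute the Hilbert--Schmidt norm via the rank-one identity $\|u\otimes u\|_{HS} = \|u\|_{\Hk}^2$ and the reproducing property, while the paper carries out the equivalent calculation by expanding in the eigenfunction basis.
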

Finally, Theorem~\ref{thm:2} follows directly the inequality in~(\ref{eqn:diff}) and the result in Corollary~\ref{cor:1}.

\subsection{Bound for $\Er(\H_a,r)$ using Compressive Sensing Theory}

In this subsection, we aim to develop a better error bound for the Nystr\"{o}m method for kernel matrices with eigenvalues that follow a power law distribution. Our analysis explicitly explores some of the key results in the theory of compressive sensing~\citep{Candes:2007:Sparsity,DBLP:journals/tit/Donoho06}.  To this end, we first  introduce the definition of the power law distribution of eigenvalues~\citep{Koltchinskii:2010:mkl,DBLP:conf/nips/KloftB11}.  The eigenvalues $\sigma_i, i=1,\ldots$ ranked in the non-increasing order follows a $p$-power law (distribution) if there exists constant $c > 0$ such that
\[
    \sigma_k \leq c k^{-p}.
\]

In the sequel, we assume the normalized eigenvalues  $\lambda_i/N, i=1,\ldots, N$ (i.e., the eigenvalues of the operator $L_N$), follow a $p$-power law distribution~\footnote{We assume a power law distribution for the normalize eigenvalues $\lambda_i/N$ because the eigenvalues $\lambda_i$ of $K$ scales in $N$.}. A well-known example of kernel with a power law eigenvalue distribution~\citep{Koltchinskii:2010:mkl} is the kernel function that generates Sobolev Spaces $W^{\alpha, 2}(\mathbb T^d)$ of smoothness $\alpha> d/2$, where $\mathbb T^d$ is $d$-dimensional torus. Its eigenvalues follow a $p$-power law with $p=2\alpha>d$. It is also observed that the eigenvalues of a Gaussian kernel by appropriately setting the width parameter  follow a power law distribution~\citep{Ming2012ICML}.

In order to exploit the compressive sensing theory~\citep{Candes:2007:Sparsity}, we introduce the definition of the coherence $\mu$ for the eigevenvector matrix $V = (\v_1, \ldots, \v_N)$ as
\[
\mu = \sqrt{N} \max\limits_{1 \leq i, j \leq N} |V_{i,j}|.
\]
Intuitively, the coherence measures the degree to which the eigenvectors in $V$ are correlated with the canonical bases.  According to the theory of compressive sensing,  highly coherent matrices are difficult (even impossible) to be recovered by matrix completion with random sampling.  As observed in previous studies~\citep{talwalkar-2010-matrix} and seen later in our analysis, the coherence of $V$ also plays an important role in measuring the approximation performance of the Nystr\"{o}m method using an uniform sampling.

The coherence measure was first introduced into the error analysis of the Nystr\"{o}m method by Talwalkar and Rostamizadeh~\citep{talwalkar-2010-matrix}. Their analysis shows that a low rank kernel matrix with incoherent eigvenvectors (i.e., with low coherence) can be accurately approximated by the Nystr\"{o}m method using an uniform sampling.  This result is generalized to noisy observation in~\citep{NIPS2011_0669} for low rank matrix. The main limitation of these results is that they only apply to low rank matrices. Recently, A. Gittens~\citep{journals/corr/abs-1110-5305} developed a relative error bound of the Nystr\"{o}m  method for kernel matrices with an arbitrary rank using a slightly different coherence measure.
Unlike the previous studies, we focus on the error bound of the Nystr\"{o}m method for kernel matrices with an arbitrary rank and a skewed eigenvalue distribution. The main result of our analysis is given in the following theorem.
\begin{theorem}\label{thm:comp}
Assume the eigenvalues $\lambda_i/N, i=1,\ldots, N$ follow a $p$-power law with $p > 1$. Given a sufficiently large number of samples, i.e.,
\begin{eqnarray*}
    m > \mu^2\max\left(16\left(\frac{\ln N}{\gamma}\right)^2, 2C_{ab}\ln(3N^3), 4C^2_{ab}\ln^2 (3N^3)\right)
\end{eqnarray*}
we have, with a probability $1 - 2N^{-3}$,
\[
    \left\|K - K_b\Kh^{\dagger}K_b^{\top} \right\|_2 \leq \widetilde{O}\left(\frac{N}{m^{p - 1}}\right),
\]
where $\widetilde{O}(\cdot)$ suppresses  the polynomial factor that depends on $\ln N$,  and $C_{ab}$ is a numerical constant as revealed in our later analysis.
\end{theorem}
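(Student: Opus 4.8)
The plan is to reduce the theorem to a bound on $\Er(\H_a,r)$ and then attack the latter by a compressive‑sensing reconstruction argument. Combining the identity $\left\|K - K_b\Kh^{\dagger}K_b^{\top}\right\|_2=\Er(\H_a)$ with Proposition~\ref{prop:1} gives
\[
\left\|K - K_b\Kh^{\dagger}K_b^{\top}\right\|_2 \;\le\; \Er(\H_a,r) + \lambda_{r+1},
\]
so it suffices to choose a rank $r$ for which both terms are $\widetilde{O}(N/m^{p-1})$. The $p$‑power law gives $\lambda_{r+1}\le cN(r+1)^{-p}$, hence $\lambda_{r+1}=O(N/m^{p-1})$ once $r\gtrsim m^{(p-1)/p}$. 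I would fix $r$ at this threshold (so $r<m$) and then show that the reconstruction error $\Er(\H_a,r)$ is no larger than $\lambda_{r+1}$.

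To control $\Er(\H_a,r)$ I would pass to the coefficient domain via the eigen‑expansion~(\ref{eqn:keig}). Writing $\Omega$ for the set of sampled indices and $V_\Omega$ for the $m\times N$ row‑submatrix of $V$, any $f=\sum_j z_j\kappa(\xh_j,\cdot)\in\H_a$ has coordinates $(V_\Omega^{\top}\z)_i$ in the basis $\{\sqrt{\lambda_i}\varphi_i\}$, while $g\in\H_b^r$ has coordinate vector $\w$ supported on the first $r$ coordinates with $\|\w\|_2\le1$. Thus
\[
\|f-g\|_{\Hk}^2 \;=\; \sum_{i=1}^r \lambda_i\big((V_\Omega^{\top}\z)_i - w_i\big)^2 + \sum_{i=r+1}^N \lambda_i (V_\Omega^{\top}\z)_i^2 .
\]
The natural construction is to annihilate the first sum by \emph{exactly interpolating} the top‑$r$ coordinates: letting $P$ be the $m\times r$ block of $V_\Omega$ on the first $r$ columns and $Q$ the complementary $m\times(N-r)$ block, take the minimum‑norm solution $\z=P(P^{\top}P)^{-1}\w$, so that $P^{\top}\z=\w$. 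Then $\Er(g,\H_a)$ equals the leakage $\sum_{i>r}\lambda_i(Q^{\top}\z)_i^2=\big\|D^{1/2}Q^{\top}P(P^{\top}P)^{-1}\w\big\|_2^2$ with $D=\mathrm{diag}(\lambda_{r+1},\ldots,\lambda_N)$, and maximizing over $\|\w\|_2\le1$ yields $\Er(\H_a,r)\le\big\|D^{1/2}Q^{\top}P(P^{\top}P)^{-1}\big\|_2^2$.

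The compressive‑sensing input enters through the incoherence of $V$: since the columns of $V$ are orthonormal and every entry is at most $\mu/\sqrt N$, uniform row sampling approximately preserves the geometry of the top‑$r$ eigenspace. Concretely, $\E[P^{\top}P]=\tfrac{m}{N}I_r$ and $\E[Q^{\top}P]=\tfrac{m}{N}(V^{(>r)})^{\top}V^{(r)}=0$; I would invoke a matrix Bernstein inequality to prove, with probability $1-2N^{-3}$, that (i) $\|P^{\top}P-\tfrac{m}{N}I_r\|_2\le\gamma\tfrac{m}{N}$, so $P^{\top}P$ is invertible with $\|(P^{\top}P)^{-1}\|_2\le\tfrac{N}{m(1-\gamma)}$, and (ii) the zero‑mean weighted cross term $D^{1/2}Q^{\top}P=\sum_{k\in\Omega}\big(D^{1/2}V_{k,(>r)}\big)V_{k,(r)}^{\top}$ is small in spectral norm. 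For (ii) the Bernstein variance proxy is governed by the tail sum $\sum_{i>r}\lambda_i\le cN\sum_{i>r}i^{-p}=O\big(Nr^{1-p}\big)$, so combining with $\|(P^{\top}P)^{-1}\|_2=O(N/m)$ gives $\Er(\H_a,r)=\widetilde{O}\big(Nr^{1-p}/m\big)$. Substituting $r\asymp m^{(p-1)/p}$ makes this $\widetilde{O}(N/m^{\,p-1+1/p})$, which is dominated by $\lambda_{r+1}=\widetilde{O}(N/m^{p-1})$, and the theorem follows. Here the second and third terms of the sample‑size condition, with the numerical constant $C_{ab}$ and $\ln(3N^3)$, are exactly the requirements of the matrix Bernstein step under the failure probability $O(N^{-3})$, while the first term with $\gamma$ enforces the relative conditioning in (i).

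The main obstacle is step (ii): bounding the weighted random cross term $D^{1/2}Q^{\top}P$. Unlike the ordinary Gram‑matrix concentration, the summands now carry the heavy diagonal weights $\lambda_i$, and one must verify that both the Bernstein variance and the uniform bound on each summand are tamed by the power‑law tail $\sum_{i>r}\lambda_i=O(Nr^{1-p})$ rather than by the much larger leading eigenvalues; obtaining the scaling $Nr^{1-p}/m$ rather than something polynomially worse is precisely what produces the $m^{p-1}$ rate. A secondary technical point is guaranteeing that the minimum‑norm interpolant exists and is stable, i.e.\ that $P^{\top}P$ is invertible and $\tfrac{N}{m}$‑bounded with the claimed probability, which is the role of the condition involving $\mu^2$, $\gamma$, and $C_{ab}$.
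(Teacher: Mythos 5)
Your reduction and your interpolant are the same as the paper's: the paper also passes to $\left\|K - K_b\Kh^{\dagger}K_b^{\top}\right\|_2 \le \max\left(\Er(\H_a,r),\lambda_{r+1}\right)$ (Propositions 1 and \ref{prop:1}) and, in Theorem~\ref{thm:10}, constructs exactly your minimum-norm interpolant $\a = V_{S,T}\left(V_{S,T}^{\top}V_{S,T}\right)^{-1}\w$ so that the top-$r$ coordinates are matched exactly. Where you genuinely diverge is in the concentration tool and in the choice of $r$. The paper bounds the leakage \emph{coefficient-wise}, $\sup_{j\in T^c}\|\z_j\|_2\le 4\mu\sqrt{r/m}$, by quoting Cand\`es--Romberg (Theorems~\ref{thm:9}, \ref{thm:9-1}, combined in Corollary~\ref{cor:3}), yielding $\Er(\H_a,r)\le\frac{16\mu^2r}{m}\sum_{i>r}\lambda_i$; it then takes $r$ as \emph{large} as possible, $r=\lfloor m/(\mu^2C_{ab}\ln(3N^3))\rfloor$, so that $16\mu^2r/m=O(1)$ and the tail sum alone, $O(Nr^{1-p})$, gives $\widetilde O(N/m^{p-1})$. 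You instead take $r$ as \emph{small} as possible ($r\asymp m^{(p-1)/p}$, so that $\lambda_{r+1}$ already equals the target) and compensate with a sharper, eigenvalue-weighted spectral bound on $D^{1/2}Q^{\top}P(P^{\top}P)^{-1}$ via matrix Bernstein. Your route is more self-contained, and your claimed scalings do hold: the variance proxy is $\max\left(\left\|\sum_k\E[X_kX_k^{\top}]\right\|,\left\|\sum_k\E[X_k^{\top}X_k]\right\|\right)\lesssim \frac{m\mu^2}{N^2}\max\left(r\lambda_{r+1},\sum_{i>r}\lambda_i\right)\asymp\frac{m\mu^2 r^{1-p}}{N}$, each summand has norm $\lesssim\mu^2\sqrt{r^{2-p}/N}$, so $\Er(\H_a,r)\lesssim \frac{\mu^2Nr^{1-p}\ln N}{m}+\frac{\mu^4Nr^{2-p}\ln^2N}{m^2}$, and at your $r$ both terms are of lower order than $N/m^{p-1}$.

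Two genuine issues remain, however. First, the step you yourself call ``the main obstacle'' is the entire technical content of your route and is left unexecuted; note that the paper never needs it, precisely because its large-$r$ choice makes the crude unweighted per-row bound sufficient. Second, and more concretely, your hypothesis on $m$ does not match the theorem's. To have $P^{\top}P$ invertible with $\left\|(P^{\top}P)^{-1}\right\|_2=O(N/m)$ you need $m\gtrsim\mu^2 r\ln N$, which with $r\asymp m^{(p-1)/p}$ reads $m\gtrsim (C\mu^2\ln N)^p$; this is a different requirement from the stated $m>\mu^2\max\left(16(\ln N/\gamma)^2,\,2C_{ab}\ln(3N^3),\,4C_{ab}^2\ln^2(3N^3)\right)$, and is strictly stronger whenever $p\ge2$ (recall $\mu\ge1$ for any orthonormal $V$). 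Your closing assertion that the stated condition is ``exactly the requirements of the matrix Bernstein step'' is therefore not right: that condition is tailored to the paper's choice $r\asymp m/(\mu^2\ln N)$, for which $m\ge\mu^2rC_{ab}\ln(3N^3)$ holds by construction, and the $\gamma$ appearing in it is the tail-exponent constant of Theorem~\ref{thm:9-1} (via $a=2\sqrt{\ln N/\gamma}$), not a conditioning tolerance. So as written your argument establishes the rate $\widetilde O(N/m^{p-1})$ under a modified sample-size hypothesis; to recover the theorem verbatim, either restate the hypothesis, or run the argument at the paper's larger $r$ — at which point the weighted Bernstein machinery becomes unnecessary.
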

\begin{remark} Compared to the approximation error in~(\ref{eqn:bound-1}), Theorem~\ref{thm:comp} improves the bound from $O(N/\sqrt{m})$ to $O(N/m^{p - 1})$ provided the eigenvalues of kernel matrix follow a power law. For the relative error bound given in~\citep{journals/corr/abs-1110-5305}, the approximation error is dominated by $O(N^2/m^{p+1})$ for eigenvalues following a $p$-power law. It is straightforward to see that the result in Theorem~\ref{thm:comp} is better than $O(N^2/m^{p+1})$ when $m \leq \sqrt{N}$, a favorable setting when $N$ is very large and $m$ is small. Finally it is worth noting that similar to~\citep{talwalkar-2010-matrix, NIPS2011_0669, journals/corr/abs-1110-5305}, the bound in Theorem~\ref{thm:comp} is meaningful only when the coherence $\mu$ of the eigenvector matrix is small (i.e., the eigenvector matrix satisfies the incoherence assumption).
\end{remark}
We emphasize that the result in Theorem~\ref{thm:comp} does not contradict the lower bound given in Theorem~\ref{thm:lb} because Theorem~\ref{thm:comp} holds only for the cases when eigenvalues of the kernel matrix follow a power law. In fact, an updated lower bound for kernel matrix with a skewed eigenvalue distribution is given in the following theorem.
\begin{theorem}\label{thm:lb1}
There exists a kernel matrix $K\in\mathbb R^{N\times N}$ with all its diagonal entries being $1$ and its eigenvalues following a $p$-power law such that for any sampling strategy that selects $m$ columns, the approximation error of the Nystr\"{o}m method is lower bounded by $\Omega(\frac{N}{m^p})$, i.e.,
\[
    \left\| K - K_b\Kh^{\dagger}K_b^{\top}\right\|_2 \geq \Omega\left(\frac{N}{m^p} \right),
\]
provided $N > 64[\ln 4]^2 m^2$.
\end{theorem}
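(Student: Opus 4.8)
The plan is to obtain the lower bound from the \emph{rank deficiency} of the Nystr\"{o}m approximation, which holds against every sampling strategy simultaneously, and then to exhibit a single kernel matrix whose power-law spectrum forces $\lambda_{m+1}$ to be large. The starting observation is purely algebraic: for any choice of $m$ sampled columns, $K_b\Kh^{\dagger}K_b^{\top}$ has rank at most $m$, since $\Kh^{\dagger}\in\R^{m\times m}$. Consequently, by the Eckart--Young--Mirsky theorem applied to the symmetric positive semidefinite matrix $K$, whose singular values coincide with its eigenvalues $\lambda_1\geq\cdots\geq\lambda_N$,
\[
\left\|K-K_b\Kh^{\dagger}K_b^{\top}\right\|_2\;\geq\;\min_{\mathrm{rank}(B)\leq m}\|K-B\|_2\;=\;\lambda_{m+1}.
\]
This inequality is deterministic and uniform over all sampling rules, so it reduces the theorem to constructing a valid kernel matrix with unit diagonal, with a $p$-power-law spectrum, and with $\lambda_{m+1}=\Omega(N/m^{p})$.

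For the construction I would prescribe the spectrum directly. Set $\lambda_k = c\,N\,k^{-p}$ for $k=1,\ldots,N$, where $c=\big(\sum_{j=1}^{N}j^{-p}\big)^{-1}$ normalizes the trace to $\sum_{k}\lambda_k=N$. This sequence is nonincreasing and nonnegative, it obeys $\lambda_k/N\leq c\,k^{-p}$ so the normalized eigenvalues follow a $p$-power law, and since $p>1$ we have $\sum_{j\geq1}j^{-p}\leq\zeta(p)<\infty$, whence $c\geq 1/\zeta(p)$ is bounded away from zero uniformly in $N$. Evaluating at the relevant index gives $\lambda_{m+1}=c\,N\,(m+1)^{-p}\geq \big(\zeta(p)2^{p}\big)^{-1}N/m^{p}=\Omega(N/m^{p})$, as desired.

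It remains to realize this spectrum by an honest kernel matrix. A symmetric positive semidefinite matrix with prescribed eigenvalues $\lambda_1\geq\cdots\geq\lambda_N$ and all diagonal entries equal to $1$ exists precisely when $(\lambda_i)$ majorizes $(1,\ldots,1)$, by the Schur--Horn theorem. Here the majorization is automatic: because the $\lambda_i$ are nonincreasing and average to $1$, the top-$k$ average satisfies $\frac1k\sum_{i=1}^{k}\lambda_i\geq\frac1N\sum_{i=1}^{N}\lambda_i=1$, i.e. $\sum_{i=1}^{k}\lambda_i\geq k$ for every $k$. Any such positive semidefinite matrix is the Gram matrix of a feature map with $\kappa(\x,\x)=1$, so it is a legitimate kernel matrix of the required form. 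Combining this with the rank bound yields $\|K-K_b\Kh^{\dagger}K_b^{\top}\|_2\geq\lambda_{m+1}=\Omega(N/m^{p})$.

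I expect the genuinely delicate point to be matching the stated hypothesis $N>64[\ln 4]^2 m^2$ while keeping the construction parallel to Theorem~\ref{thm:lb}. The rank argument above needs only $m<N$, so if the theorem is to be proved through the same adversarial instance used for Theorem~\ref{thm:lb}---rather than through a bare Schur--Horn existence statement---the hard part is producing an \emph{explicit} kernel (for example a randomized Gram construction $K=U\Lambda U^{\top}$ whose diagonal concentrates at $1$) and controlling the resulting error terms; the factor $[\ln 4]^2$ and the quadratic threshold in $m$ then arise from the concentration and union bound over the $N$ diagonal entries, which forces $m=O(\sqrt{N})$. I would therefore first secure the clean rank-based bound, and only afterwards reconcile the constants with the construction of Theorem~\ref{thm:lb} by rescaling its spectrum to the $p$-power-law profile $c\,N\,k^{-p}$.
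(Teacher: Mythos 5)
Your proof is correct, and it reaches the theorem by a genuinely different route than the paper. Both arguments share the same skeleton---since $K_b\Kh^{\dagger}K_b^{\top}$ has rank at most $m$, the spectral-norm error is at least $\lambda_{m+1}$, so it suffices to exhibit a unit-diagonal positive semidefinite matrix whose normalized spectrum follows a $p$-power law with $\lambda_{m+1}=\Omega(N/m^p)$---but the two halves are established with different tools. The paper (whose proof, as it states, is ``almost identical'' to that of Theorem~\ref{thm:lb}) proves the rank obstruction by hand inside its functional framework: it picks a unit vector $\u$ in the span of the top $m+1$ eigenvectors of $K$ with $K_b^{\top}\u=0$, which exists by a dimension count since $\mathrm{rank}(K_b^{\top}V_{1:(m+1)})\leq m$, whence $\Er(\H_a)\geq \sum_{i=1}^{m+1}w_i^2\lambda_i\geq\lambda_{m+1}$; and it realizes the kernel matrix \emph{randomly}, as a weighted sum of Bernoulli $\pm1$ rank-one terms, $K=\sum_{i=1}^{m+1}c_i\u_i\u_i^{\top}$ with weights $c_i\propto i^{-p}$ normalized so $\sum_i c_i=1$ (which forces the diagonal to be exactly $1$), invoking the concentration inequality of Proposition~\ref{prop:conc} to guarantee that the nonzero eigenvalues are $\Theta(Nc_i)$; that concentration step is precisely where the hypothesis $N>64[\ln 4]^2m^2$ originates. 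You instead cite Eckart--Young--Mirsky for the rank obstruction (logically equivalent to the paper's $\widehat g$ construction, which is in effect a proof of that inequality for this matrix) and Schur--Horn for existence, prescribing the spectrum exactly as $\lambda_k=cNk^{-p}$; your majorization check is right, and the resulting argument is deterministic and needs only $m<N$. What each buys: your route is shorter, avoids randomness entirely, and shows the hypothesis $N>64[\ln 4]^2m^2$ is superfluous, so you in fact prove a slightly stronger statement; the paper's route stays self-contained within its operator-theoretic machinery (no external matrix-analysis theorems) and reuses a tool (Proposition~\ref{prop:conc}) already needed elsewhere in the paper. Finally, the hedging in your last paragraph is unnecessary: since the theorem only asserts the bound \emph{provided} $N>64[\ln 4]^2m^2$, a proof valid under the weaker condition $m<N$ certainly suffices, and there is no need to reconcile your Schur--Horn construction with the paper's randomized one or to recover its constants.
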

We skip the proof of this theorem as it is almost identical to that of Theorem~\ref{thm:lb}. The gap between the upper bound and the lower bound given in Theorems~\ref{thm:comp} and \ref{thm:lb1} indicates that there is potentially a room for further improvement .

Next, we present several theorems and corollaries to pave the path for the proof of Theorem~\ref{thm:comp}. We borrow the following two theorems from the compressive sensing theory~\citep{Candes:2007:Sparsity} that are the key to our analysis.
\begin{theorem} (Theorem 1.2 from~\citep{Candes:2007:Sparsity})~\label{thm:9} Let $V$ be an $N\times N$ orthogonal matrix ($V^{\top}V = I$) with coherence $\mu$. Fix a subset $T$ of the signal domain. Choose a subset $S$ of the measurement domain of size $|S| = m$ uniformly at random. Suppose that the number of measurements $m$ obeys
$
m \geq |T| \mu^2 \max\left(C_a \ln |T|, C_b \ln(3/\delta)\right)
$
for some positive constants $C_a$ and $C_b$. Then
\[
\Pr\left(\left\|\frac{N}{m} V_{S, T}^{\top}V_{S, T} - I\right\|_2 \geq 1/2 \right) \leq \delta.
\]
\end{theorem}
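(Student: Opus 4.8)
The plan is to recognize Theorem~\ref{thm:9} as a near-isometry (matrix Chernoff) statement for a random row-submatrix of a matrix with orthonormal columns, and to prove it by matrix concentration. First I would observe that since $V^{\top}V=\I$, the columns indexed by $T$ are orthonormal, so the $N\times|T|$ matrix $V_{:,T}$ satisfies $V_{:,T}^{\top}V_{:,T}=\I_{|T|}$. Writing $\a_i=(V_{i,j})_{j\in T}^{\top}\in\R^{|T|}$ for the $i$-th row restricted to $T$, this reads $\sum_{i=1}^N \a_i\a_i^{\top}=\I_{|T|}$. Modeling the uniform choice of $S$ as drawing indices $i_1,\ldots,i_m$ (I would first do this with replacement and handle the without-replacement refinement at the end), we get $\frac{N}{m}V_{S,T}^{\top}V_{S,T}=\sum_{k=1}^m X_k$ with $X_k=\frac{N}{m}\a_{i_k}\a_{i_k}^{\top}$. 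Each $X_k$ is rank-one and positive semidefinite, and $\E[X_k]=\frac{N}{m}\cdot\frac1N\sum_{i}\a_i\a_i^{\top}=\frac1m\I_{|T|}$, so $\E\sum_k X_k=\I_{|T|}$. The problem is thus to show $\|\sum_k X_k-\I\|_2\le 1/2$ with high probability.

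The single quantitative input supplied by the coherence is a uniform bound on the size of the summands. Since $\|\a_i\|_2^2=\sum_{j\in T}V_{i,j}^2\le |T|\max_{i,j}V_{i,j}^2=|T|\mu^2/N$, each summand obeys $\|X_k\|_2=\frac{N}{m}\|\a_{i_k}\|_2^2\le \frac{|T|\mu^2}{m}=:R$. This is exactly the regime for the matrix Chernoff inequality: for an i.i.d.\ sum of positive semidefinite matrices bounded by $R$ in spectral norm whose mean has all eigenvalues equal to $1$, both the smallest and the largest eigenvalues concentrate, giving $\Pr\left(\|\sum_k X_k-\I\|_2\ge 1/2\right)\le 2|T|\exp(-c/R)=2|T|\exp\!\left(-\,c\,m/(|T|\mu^2)\right)$ for an absolute constant $c$ (coming from the deviation level $\epsilon=1/2$), via a union bound over the upper- and lower-tail estimates for $\lambda_{\max}$ and $\lambda_{\min}$.

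It then remains to match the stated sample complexity. Requiring $2|T|\exp(-cm/(|T|\mu^2))\le\delta$ amounts to $m\ge c^{-1}|T|\mu^2(\ln|T|+\ln(2/\delta))$, and since $\ln|T|+\ln(2/\delta)\le 2\max(\ln|T|,\ln(3/\delta))$ this is implied by the hypothesis $m\ge |T|\mu^2\max(C_a\ln|T|,C_b\ln(3/\delta))$ for appropriate constants $C_a,C_b$. The crucial feature is that the dimensional prefactor in the matrix Chernoff tail is $|T|$, the size of the fixed column set, rather than the ambient dimension $N$; this is precisely what yields a sample complexity scaling with $|T|$ and the coherence $\mu^2$ instead of with $N$.

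The main obstacle is the matrix concentration step itself. One route is to invoke Tropp's matrix Chernoff bound as a black box; a self-contained route, closer to the original compressive-sensing literature, would establish the same tail via the Ahlswede--Winter exponential-moment method or a Rudelson-type symmetrization estimate for $\E\|\sum_k\epsilon_k X_k\|_2$, which is where the logarithmic dimension factor and the absolute constants are generated. A secondary technical point is that $S$ is drawn without replacement (a subset of size exactly $m$), whereas the clean i.i.d.\ argument samples with replacement; I would bridge this either by the standard comparison showing that without-replacement sums concentrate at least as sharply, or by analyzing the Bernoulli sampling model and conditioning on the realized sample size, both of which affect only the constants.
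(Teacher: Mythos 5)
The paper never proves this statement: it is imported verbatim as Theorem 1.2 of \citet{Candes:2007:Sparsity} and used purely as a black box (``We borrow the following two theorems from the compressive sensing theory''), so there is no internal proof to compare against. Your proposal is nonetheless a correct proof sketch of the imported result, and it takes the modern concentration route rather than the original one. The reduction is sound: orthonormality of the columns of $V$ restricted to $T$ gives $\sum_{i=1}^N \a_i\a_i^{\top} = \I_{|T|}$ for the restricted rows $\a_i$; the coherence hypothesis gives the uniform summand bound $\|X_k\|_2 \leq |T|\mu^2/m$; and the matrix Chernoff inequality (upper and lower tails plus a union bound) yields $\Pr\left(\left\|\sum_k X_k - \I\right\|_2 \geq 1/2\right) \leq 2|T|\exp\left(-cm/(|T|\mu^2)\right)$, with the crucial dimension prefactor $|T|$ rather than $N$, after which your algebra correctly shows the stated hypothesis $m \geq |T|\mu^2\max\left(C_a\ln|T|, C_b\ln(3/\delta)\right)$ suffices for appropriate constants. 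The two issues you flag are the genuine ones and both are bridgeable exactly as you say: the concentration step can be delegated to Tropp's matrix Chernoff bound (or reproduced by the Ahlswede--Winter exponential-moment method), and the without-replacement sampling is handled either by the Gross--Nesme/Hoeffding-style comparison showing the matrix Laplace-transform bound only improves, or by passing through a Bernoulli sampling model, which is in fact what Cand\`es and Romberg do. By contrast, the original 2007 argument predates the matrix Chernoff machinery and controls $\E\left\|\frac{N}{m}V_{S,T}^{\top}V_{S,T} - \I\right\|_2$ via symmetrization and a Rudelson-type selection estimate, followed by a separate large-deviation step; your route reaches the same $|T|\mu^2\left(\ln|T| + \ln(1/\delta)\right)$ sample complexity in one stroke, at the price of invoking a later-developed tool as a black box.
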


\begin{theorem} (Lemma 3.3 from~\citep{Candes:2007:Sparsity})~\label{thm:9-1} Let $V$, $S$, and $T$ be the same as defined in Theorem~\ref{thm:9}. Let $\u_k^{\top}$ be the $k$-th row of $V^{\top}_{S, *}V_{S, T}$. Define $\sigma^2 = \mu^2 m \max\left(1, \mu |T|/\sqrt{m}\right)$. Fix $a > 0$ obeying $a \leq (m/\mu^2)^{1/4}$ if $\mu |T|/\sqrt{m} > 1$ and $a \leq (m/[\mu^2|T|])^{1/2}$ otherwise. Let $\z_k = (V^{\top}_{S, T} V_{S, T})^{-1} \u_k$. Then, we have
\begin{align*}
\Pr&\left(\sup\limits_{k \in T^c} \|\z_k\|_2 \geq 2\mu\sqrt{|T|/m} + 2a\sigma/m \right)\\
& \leq N\exp(-\gamma a^2) + \Pr\left(\|V_{S, T}^{\top} V_{S, T}\|_2 \leq \frac{m}{2N} \right)
\end{align*}
for some positive constant $\gamma$, where $T^c$ stands for the complementary set to $T$.
\end{theorem}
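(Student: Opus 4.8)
The plan is to read $\z_k = (V_{S,T}^{\top}V_{S,T})^{-1}\u_k$ as a least--squares / regression coefficient and to split the randomness into a conditioning part and a correlation part. Write $G = V_{S,T}^{\top}V_{S,T}$ and observe that, for $k\in T^c$, $\u_k = V_{S,T}^{\top} V_{S,\{k\}}$ is exactly the inner product of the sampled $k$-th column with the sampled columns indexed by $T$. Then $\|\z_k\|_2 \le \|G^{-1}\|_2\,\|\u_k\|_2$. The conditioning of $G$ is already delivered by Theorem~\ref{thm:9}: off the failure event $\{\|V_{S,T}^{\top}V_{S,T}\|_2 \le m/(2N)\}$ the eigenvalues of $G$ lie near $m/N$, so $\|G^{-1}\|_2 \le 2N/m$. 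Hence everything reduces to a uniform high-probability upper bound on $\|\u_k\|_2$, the conditioning failure being simply added as the second probability term.

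Next I would control the first two moments of $\u_k$. Since $V$ is orthogonal and $k\in T^c$, the columns $\v_k$ and $\v_j$ ($j\in T$) are orthonormal, so $\E[\u_k] = \tfrac{m}{N}\sum_{i=1}^{N} V_{i,k}V_{i,T}^{\top} = 0$; that is, $\u_k$ is a zero-mean sum of the independent, bounded vectors $V_{i,k}V_{i,T}$, $i\in S$. Using the incoherence bound $|V_{i,j}|\le \mu/\sqrt{N}$ together with $\|\v_j\|_2 = 1$, a one-line second-moment computation gives $\E\|\u_k\|_2^2 \le \mu^2|T|m/N^2$. Multiplying its square root by $2N/m$ already reproduces the first term $2\mu\sqrt{|T|/m}$ of the claimed threshold.

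For the fluctuation term I would apply a Bernstein/Talagrand-type concentration inequality for the Euclidean norm of a sum of independent bounded random vectors, feeding in the weak variance above and the variance proxy $\sigma^2 = \mu^2 m\max(1,\mu|T|/\sqrt{m})$ from the statement. This should yield, for each fixed $k$, a tail of the form $\Pr\!\left(\|\u_k\|_2 \ge \mu\sqrt{|T|m}/N + a\sigma/N\right) \le \exp(-\gamma a^2)$. Scaling the event inside by $2N/m$ turns the right-hand side into $2\mu\sqrt{|T|/m} + 2a\sigma/m$, exactly the stated bound on $\|\z_k\|_2$. A union bound over the at most $N$ indices $k\in T^c$ then produces the $N\exp(-\gamma a^2)$ term, and adding the conditioning-failure probability from Theorem~\ref{thm:9} completes the estimate.

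The main obstacle is the vector concentration inequality with precisely the variance proxy $\sigma^2$ and the regime-dependent ceiling on $a$. One has to carry along both the weak variance $\E\|\u_k\|_2^2$ and the strong variance (the worst-case variance of $\u_k$ along a fixed direction), and the case split $\mu|T|/\sqrt{m}\gtrless 1$ hidden in the definition of $\sigma^2$ is exactly what determines which of the two dominates. The constraints $a\le(m/\mu^2)^{1/4}$ and $a\le(m/[\mu^2|T|])^{1/2}$ are what confine the tail to its sub-Gaussian regime, so that the deviation is genuinely governed by $\sigma$ and the tail reads $\exp(-\gamma a^2)$ rather than acquiring the heavier linear-in-$a$ Bernstein contribution. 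Matching these thresholds so that the sub-Gaussian estimate is valid on exactly the indicated range of $a$ is the delicate point; the matrix-norm splitting and the union bound are then routine bookkeeping.
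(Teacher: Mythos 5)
You should first note what you are up against here: the paper does not prove this statement at all---it is imported verbatim as Lemma~3.3 of \citep{Candes:2007:Sparsity}---so there is no in-paper proof to compare with; the relevant benchmark is the original Cand\`es--Romberg argument. Measured against that, your sketch is a faithful reconstruction of the same route: the splitting $\|\z_k\|_2 \le \|(V_{S,T}^{\top}V_{S,T})^{-1}\|_2\,\|\u_k\|_2$ with the ill-conditioning event separated off, the observation that $\u_k=V_{S,T}^{\top}V_{S,\{k\}}$ has mean zero by orthogonality of the columns $\v_k$ and $\v_j$, $j\in T$, the second-moment estimate $\E\|\u_k\|_2^2\le \mu^2|T|m/N^2$ that produces the term $2\mu\sqrt{|T|/m}$ after rescaling by $2N/m$, and the union bound over $k\in T^c$ giving the factor $N$ are all exactly how the source proceeds.

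Two points would still need work before this counts as a proof rather than a plan. First, the step you yourself flag as the main obstacle is the entire content of the lemma: in the source, the tail $\exp(-\gamma a^2)$ with variance proxy $\sigma^2$ and the regime-dependent ceilings on $a$ are the \emph{output} of an explicit argument (Talagrand's concentration inequality applied to $\sup_{\|\w\|_2\le 1}\langle \w,\u_k\rangle$ together with moment computations), so invoking a ``Bernstein/Talagrand-type inequality'' already calibrated with these parameters is assuming precisely what has to be shown. Second, under the sampling model of Theorem~\ref{thm:9} ($S$ of size $m$ chosen uniformly), the summands $V_{i,k}V_{i,T}$, $i\in S$, are \emph{not} independent, contrary to what your sketch asserts; the original analysis works in the Bernoulli model (each index kept independently with probability $m/N$) and then transfers the conclusion, a reduction your argument would need to include. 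A minor notational caution: the complement of the event $\|V_{S,T}^{\top}V_{S,T}\|_2\le m/(2N)$ only controls the largest eigenvalue, whereas your bound $\|(V_{S,T}^{\top}V_{S,T})^{-1}\|_2\le 2N/m$ needs $\lambda_{\min}(V_{S,T}^{\top}V_{S,T})\ge m/(2N)$; this quirk is inherited from the statement itself (the paper's own use of it in Corollary~\ref{cor:3} likewise treats it as the $\lambda_{\min}$ event), but a self-contained proof should state the event correctly.
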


Combining the results from Theorem~\ref{thm:9} and Theorem \ref{thm:9-1}, we have the following high probability bound for $\sup_{k\in T^c}\|\z_k\|_2$.
\begin{corollary} \label{cor:3}
If $|T| \geq \max\left(C_{ab}\ln(3N^3), 4\frac{\ln N}{\gamma}\right)$, and $$ \mu^2\max\left(|T|C_{ab} \ln(3N^3), 16\left(\frac{\ln N}{\gamma}\right)^2\right)\leq m < \mu^2 |T|^2,$$
where $C_{ab}=\max(C_a, C_b)$, then with a probability $1 - 2N^{-3}$, we have
\[\displaystyle
    \sup\limits_{k \in T^c} \|\z_k\|_2 \leq 4 \mu\sqrt{\frac{ |T|}{m}}.
\]
\end{corollary}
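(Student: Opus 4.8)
The plan is to combine Theorem~\ref{thm:9} and Theorem~\ref{thm:9-1} by committing to a concrete choice of the free parameter $a$ in Theorem~\ref{thm:9-1}, and then checking that the hypotheses on $|T|$ and $m$ drive both terms of the failure probability below $N^{-3}$.

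First I would fix the regime. The upper bound $m < \mu^2|T|^2$ is equivalent to $\mu|T|/\sqrt{m} > 1$, so we land in the case where $\sigma^2 = \mu^3|T|\sqrt{m}$ and the admissible range is $a \leq (m/\mu^2)^{1/4}$. I would then take $a = (m/\mu^2)^{1/4}$, the largest value allowed. The key algebraic observation is that this choice makes the second summand in Theorem~\ref{thm:9-1} match the first: substituting $\sigma = \mu^{3/2}|T|^{1/2}m^{1/4}$ gives $2a\sigma/m = 2a\mu^{3/2}|T|^{1/2}m^{-3/4}$, and the inequality $2a\sigma/m \leq 2\mu\sqrt{|T|/m}$ simplifies, after dividing through, to exactly $a \leq (m/\mu^2)^{1/4}$. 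Hence with equality $2a\sigma/m = 2\mu\sqrt{|T|/m}$, so the full bound $2\mu\sqrt{|T|/m} + 2a\sigma/m$ is at most $4\mu\sqrt{|T|/m}$, which is the claimed conclusion.

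Next I would bound the two failure terms separately. For the exponential term, $a^2 = \sqrt{m}/\mu$, so $N\exp(-\gamma a^2) = N\exp(-\gamma\sqrt{m}/\mu)$; the hypothesis $m \geq 16\mu^2(\ln N/\gamma)^2$ is precisely what forces $\gamma\sqrt{m}/\mu \geq 4\ln N$, whence this term is at most $N^{-3}$. For the conditioning term $\Pr(\|V_{S,T}^{\top}V_{S,T}\|_2 \leq m/(2N))$, I would invoke Theorem~\ref{thm:9} with $\delta = N^{-3}$, using the containment $\{\|V_{S,T}^{\top}V_{S,T}\|_2 \leq m/(2N)\} \subseteq \{\|\tfrac{N}{m}V_{S,T}^{\top}V_{S,T} - I\|_2 \geq 1/2\}$: a spectral norm at most $m/(2N)$ makes the largest eigenvalue of $\tfrac{N}{m}V_{S,T}^{\top}V_{S,T}$ at most $1/2$, hence the deviation from $I$ at least $1/2$. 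Theorem~\ref{thm:9} then bounds this by $N^{-3}$ provided $m \geq |T|\mu^2 C_{ab}\ln(3N^3)$, which follows from the lower bound on $m$ together with $\ln|T| \leq \ln N \leq \ln(3N^3)$ and $C_{ab} = \max(C_a,C_b)$. Summing the two contributions yields a total failure probability of at most $2N^{-3}$.

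Finally I would record the bookkeeping role of the conditions on $|T|$: they exist only to make the admissible interval for $m$ nonempty, since $m \geq \mu^2|T|C_{ab}\ln(3N^3)$ combined with $m < \mu^2|T|^2$ forces $|T| > C_{ab}\ln(3N^3)$, while $m \geq 16\mu^2(\ln N/\gamma)^2$ with $m < \mu^2|T|^2$ forces $|T| > 4\ln N/\gamma$. The only real obstacle is getting the exponents right in the regime analysis---verifying that $a = (m/\mu^2)^{1/4}$ makes $2a\sigma/m$ align exactly with $2\mu\sqrt{|T|/m}$, and that $a^2 = \sqrt{m}/\mu$ meshes with the $16(\ln N/\gamma)^2$ threshold---rather than any conceptual difficulty; once the correct regime and value of $a$ are identified, the remainder is direct substitution.
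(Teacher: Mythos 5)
Your proof is correct and takes essentially the same route as the paper's: both combine Theorem~\ref{thm:9} (instantiated at $\delta = N^{-3}$) with Theorem~\ref{thm:9-1} in the regime $\mu|T|/\sqrt{m} > 1$ forced by $m < \mu^2|T|^2$, and both spend the hypothesis $m \geq 16\mu^2(\ln N/\gamma)^2$ exactly once to control the pair (admissibility of $a$, exponential failure term). The only difference is which endpoint of the admissible interval for $a$ you pick: the paper sets $a = 2\sqrt{\ln N/\gamma}$ so that $N\exp(-\gamma a^2) = N^{-3}$ and then checks $a \leq (m/\mu^2)^{1/4}$, whereas you set $a = (m/\mu^2)^{1/4}$ so that $2a\sigma/m = 2\mu\sqrt{|T|/m}$ exactly and then check the exponential term is at most $N^{-3}$ --- the same algebra read in the opposite direction.
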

Using Corollary~\ref{cor:3}, we have the following bound for $\Er(\H_a, r)$.
\begin{theorem} \label{thm:10}
If $r > \max(C_{ab}\ln(3N^3), 4\ln N/\gamma)$ and
\[
    \mu^2\max\left(rC_{ab}\ln(3N^3), 16\left(\frac{\ln N}{\gamma}\right)^2 \right) \leq m <\mu^2 r^2,
\]
then, with a probability $1 - 2N^{-3}$, we have
\[
    \Er(\H_a, r) \leq \frac{16\mu^2 r}{m}\sum_{i=r+1}^N \lambda_i.
\]
\end{theorem}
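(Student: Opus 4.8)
The plan is to reduce $\Er(\H_a,r)$ to the quantity $\sup_{k\in T^c}\|\z_k\|_2$ that Corollary~\ref{cor:3} already controls, by constructing for each target function an explicit near-interpolant in $\H_a$. I identify the sampled set $\Dh$ with an index set $S\subseteq\{1,\ldots,N\}$ of size $m$ and fix $T=\{1,\ldots,r\}$, so that $T^c=\{r+1,\ldots,N\}$ and the hypotheses of Theorem~\ref{thm:10} are exactly those of Corollary~\ref{cor:3} with $|T|=r$.

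First I would rewrite the approximation error in the eigenfunction basis. Since $\langle\varphi_i,\varphi_j\rangle_{\Hk}=\delta(i,j)$, any $f=\sum_{s\in S}z_s\kappa(\x_s,\cdot)\in\H_a$ expands, through $\kappa(\x_s,\cdot)=\sum_{i=1}^N\sqrt{\lambda_i}V_{s,i}\varphi_i$, as $f=\sum_{i=1}^N\sqrt{\lambda_i}(V_{S,*}^\top\z)_i\varphi_i$. Hence for any $g=\sum_{i=1}^r w_i\sqrt{\lambda_i}\varphi_i\in\H_b^r$ with $\sum_i w_i^2\le1$, the error splits diagonally into $\|f-g\|_{\Hk}^2=\sum_{i=1}^r\lambda_i\big((V_{S,*}^\top\z)_i-w_i\big)^2+\sum_{i=r+1}^N\lambda_i(V_{S,*}^\top\z)_i^2$, where the first sum runs over the retained components $T$ and the second over $T^c$.

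Next I would choose $\z$ to annihilate the first sum exactly. Setting $\z=V_{S,T}(V_{S,T}^\top V_{S,T})^{-1}\w$ with $\w=(w_1,\ldots,w_r)^\top$ --- admissible because Theorem~\ref{thm:9}, as embedded in Corollary~\ref{cor:3}, makes $V_{S,T}^\top V_{S,T}$ invertible with high probability --- forces $(V_{S,*}^\top\z)_i=w_i$ for all $i\in T$, so the first sum vanishes. The crux is then to recognize the surviving $T^c$-components as precisely the vectors bounded in Lemma~\ref{thm:9-1} and Corollary~\ref{cor:3}: for $k\in T^c$, writing $\u_k^\top$ for the $k$-th row of $V_{S,*}^\top V_{S,T}$, one has $(V_{S,*}^\top\z)_k=\u_k^\top(V_{S,T}^\top V_{S,T})^{-1}\w=\z_k^\top\w$, since $\z_k=(V_{S,T}^\top V_{S,T})^{-1}\u_k$ and the matrix is symmetric.

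Finally I would combine Cauchy--Schwarz with the high-probability estimate. As $\|\w\|_2\le1$, $\big((V_{S,*}^\top\z)_k\big)^2=(\z_k^\top\w)^2\le\|\z_k\|_2^2$, so $\|f-g\|_{\Hk}^2\le\big(\sup_{k\in T^c}\|\z_k\|_2^2\big)\sum_{i=r+1}^N\lambda_i$. Invoking the bound $\sup_{k\in T^c}\|\z_k\|_2\le4\mu\sqrt{r/m}$ of Corollary~\ref{cor:3} gives $\|f-g\|_{\Hk}^2\le\frac{16\mu^2 r}{m}\sum_{i=r+1}^N\lambda_i$ on an event of probability $1-2N^{-3}$ that depends only on $S$ and $T$, not on $g$. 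Since this competitor $f\in\H_a$ bounds $\Er(g,\H_a)=\min_{f\in\H_a}\|f-g\|_{\Hk}^2$ and the estimate is uniform over $\{\|\w\|_2\le1\}$, taking the maximum over $g\in\H_b^r$ yields the claim. The only genuine obstacle is the bookkeeping in the third step --- verifying that the interpolation residual on $T^c$ equals the compressive-sensing vector $\z_k^\top\w$ --- after which the conclusion is immediate from the borrowed results.
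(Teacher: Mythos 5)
Your proof is correct and follows essentially the same route as the paper's: the same competitor with coefficient vector $V_{S,T}\left(V_{S,T}^{\top}V_{S,T}\right)^{-1}\w$, the same identification of the residual components on $T^c$ with the vectors $\z_k$ controlled by Corollary~\ref{cor:3}, and the same Cauchy--Schwarz step yielding $\frac{16\mu^2 r}{m}\sum_{i=r+1}^N\lambda_i$. Your explicit observation that the high-probability event depends only on $S$ and $T$, so the bound is uniform over $g\in\H_b^r$, makes precise a point the paper leaves implicit, but the argument is the same.
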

\begin{proof}
For the sake of simplicity, we assume that the first $m$ examples are sampled, i.e., $\Dh = \{\x_1, \ldots, \x_m\}$.  For any $g \in \H_b^r$, we have
$
    g(\cdot) = \sum_{i=1}^r w_i\lambda_i^{1/2}\varphi_i(\cdot),
$
with $\sum_{i=1}^r w_i^2 \leq 1$. Below, we will make specific construction of $f$ based on $g$ that ensures a small approximation error. Let $f$ be
\begin{align*}
    f(\cdot) &= \sum_{j=1}^m a_j \kappa(\x_j, \cdot) = \sum_{i=1}^N \varphi_i(\cdot) \lambda_i^{1/2}\left(\sum_{j=1}^m a_j V_{j,i} \right) \\
    &= \sum_{i=1}^N b_i \lambda_i^{1/2}\varphi_i(\cdot),
\end{align*}
where
$
    b_i = \sum_{j=1}^m a_j V_{j,i}, i=1,\ldots, N
$, and the value of $\a=(a_1,\ldots, a_m)^{\top}$ will be given later.
Define $T = \{1, \ldots, r\}$ and $S = \{1, \ldots, m\}$. Under the condition that
\begin{align*}
    m &\geq r\mu^2\max\left(C_a, C_b)\right)\ln(3N^{3})\\
    &\geq r\mu^2\max\left(C_a\ln r, C_b\ln(3N^{3})\right),
\end{align*}
Theorem~\ref{thm:9} holds, and therefore  with a probability at least $1 - N^{-3}$,
\begin{eqnarray}
    \frac{m}{2N} \leq \lambda_{\min}\left(V_{S, T}^{\top}V_{S, T}\right) \leq \lambda_{\max}\left(V_{S, T}^{\top}V_{S, T}\right)  \leq \frac{3m}{2N}.
\end{eqnarray}
We construct  $\a$ as
$
    \a = V_{S, T}\left[V_{S, T}^{\top}V_{S, T}\right]^{-1} \w
$,
where $\w = (w_1, \ldots, w_r)^{\top}$. Since
\[
    \mathbf b = V^{\top}_{S, *}\a=V^{\top}_{S,*}V_{S, T}\left(V_{S, T}^{\top}V_{S, T} \right)^{-1}\w,
\]
where $\mathbf b =(b_1,\cdots, b_N)^{\top}$, it is straightforward to see that  $b_j =  w_j$ for $j\in T$.
Using the result from Corollary~\ref{cor:3}, we have, with a probability at least $1 - 2N^{-3}$,
\[
    \max\limits_{j\in T^c} |b_j| \leq \max\limits_{j\in T^c} \|\z_j\|_2 \|\w\|_2 \leq 4\mu\sqrt{\frac{ r}{m}},
\]
where $\z_j^{\top}$ is the $j$-th row of matrix $V^{\top}_{S, *}V_{S, T}\left(V_{S, T}^{\top}V_{S, T} \right)^{-1}$.
We thus obtain
\[
    \|f - g\|_{\Hk}^2 = \left\|\sum_{i\in T^c} \lambda_i^{1/2}b_i \varphi_i(\cdot)\right\|_{\Hk}^2 \leq  \frac{16 \mu^2 r}{m} \sum_{i=r+1}^N \lambda_i.
\]
Hence,
\[
    \Er(\H_a, r) = \max\limits_{g \in \H_b^r} \min\limits_{f \in \H_a} \|f - g\|_{\Hk}^2 \leq \frac{16 \mu^2 r}{m} \sum_{i=r+1}^N \lambda_i.
\]
\end{proof}

\begin{remark} It is worthwhile to compare the result in Theorem~\ref{thm:10}, i.e., $\Er(\H_a, r) = O\left(\mu^2r\sum_{i=r+1}^N \lambda_i/m\right)$, to the relative error bound given in~\citep{journals/corr/abs-1110-5305}, i.e., $\Er(\H_a, r) \leq O\left(\lambda_{r+1} N/m\right)$. In the case when the eigenvalues decay fast (e.g., eigenvalues follow a power law), we have $\sum_{i=r+1}^N \lambda_i \ll N\lambda_{r+1}$, and therefore our bound is significantly better than the relative bound in~\citep{journals/corr/abs-1110-5305}. On the other hand, when eigenvalues follow a flat distribution (e.g., $\lambda_i \approx \lambda_{r+1}$ for all $i \in [r+2, N]$), we have $\sum_{i=r+1}^N \lambda_i \approx N\lambda_{r+1}$, and therefore our bound is worse than the relative bound in~\citep{journals/corr/abs-1110-5305} by a factor of $\mu^2 r$.
\end{remark}
Finally, we show the proof of Theorem~\ref{thm:comp} using Theorem~\ref{thm:10}.

\begin{proof}[Proof of Theorem~\ref{thm:comp}]
Let $\displaystyle r= \left\lfloor\frac{m}{\mu^2C_{ab}\ln (3N^3)}\right\rfloor$,
then
\[
\mu^2r C_{ab}\ln(3N^3) \leq m <\mu^2r^2,
\]
where the right inequality follows that $\displaystyle r\geq \frac{m}{2\mu^2C_{ab}\ln(3N^3)}$, and $m>4\mu^2C^2_{ab}\ln^2(3N^3)$. Then the conditions in Theorem~\ref{thm:10} hold and we have
\begin{align*}
\left\|K - K_b\Kh^{\dagger}K_b^{\top} \right\|_2 &\leq\max\left(\Er(\H_a, r), \lambda_{r+1}\right)  \\
&\leq \max\left(\frac{16\mu^2 r}{m}, 1\right)\sum_{i=r+1}^N \lambda_i.
\end{align*}
Since $\max(16\mu^2 r/m, 1)\leq O(1)$ due to the specific value we choose for $r$, and $\sum_{i=r+1}^N \lambda_i\leq O(N/r^{p-1})$ due to the power law distribution,
then
\[
\left\|K - K_b\Kh^{\dagger}K_b^{\top} \right\|_2 \leq O\left(\frac{N}{r^{p-1}}\right)\leq \widetilde{O}\left(\frac{N}{m^{p-1}}\right).
\]
\end{proof}


\section{Application of the Nystr\"{o}m Method to Kernel Classification}

Although the Nystr\"{o}m method was proposed in~\citep{Williams01usingthe} to speed up kernel machine, few studies examine the application of the Nystr\"{o}m method to kernel classification. In fact, to the best of our knowledge, ~\citep{Williams01usingthe} and~\citep{cortes-2010-nystrom} are the only two pieces of work that explicitly explore the Nystr\"{o}m method for kernel classification. The key idea of both works is to apply the Nystr\"{o}m method to approximate the kernel matrix with a low rank matrix in order to reduce the computational cost. More specifically, we consider the following optimization problem for kernel classification
\begin{eqnarray}
    \min\limits_{f \in \Hk} \L_N(f)=\frac{\lambda}{2}\|f\|_{\Hk}^2 + \frac{1}{N}\sum_{i=1}^N \ell(y_if(\x_i)) \label{eqn:svm},
\end{eqnarray}
where $y_i \in \{-1, +1\}$ is the class label assigned to instance $\x_i$, and $\ell(z)$ is a convex loss function. To facilitate our analysis, we assume (i) $\ell(z)$ is strongly convex with modulus $\sigma$, i.e. $|\ell''(z)| \geq \sigma$~\footnote{Loss functions such as square loss used for regression and logit function used for logistic regression are strongly convex}, and (ii) $\ell(z)$ is Lipschitz continuous, i.e. $|\ell'(z)| \leq C$ for any $z$ within the domain. Using the convex conjugate of the loss function $\ell(z)$, denoted by $\ell_*(\alpha), \alpha\in\Omega$, where $\Omega$ is the domain for dual variable $\alpha$, we can cast the problem in~(\ref{eqn:svm}) into the following optimization problem over $\alpha$

\begin{eqnarray}
    \max\limits_{\{\alpha_i \in \Omega\}_{i=1}^N} - \frac{1}{N}\sum_{i=1}^N \ell_*(\alpha_i) - \frac{1}{2\lambda N^2} (\alpha \circ \y)^{\top} K (\alpha \circ \y) \label{eqn:svm-dual},
\end{eqnarray}
with the solution $f$ given by
$
    f = -\frac{1}{N\lambda}\sum_{i=1}^N \alpha_iy_i \kappa(\x_i, \cdot)
$.
By the Fenchel conjugate theory, we have $\max\limits_{\alpha \in \Omega} |\alpha|^2\leq C^2$.
because $|\ell'(z)|\leq C$.

To reduce the computational  cost, \cite{Williams01usingthe} and \cite{cortes-2010-nystrom}  suggest  to replace the kernel matrix $K$ with its low rank approximation $\Kt = K_b\Kh^{\dagger}K_b^{\top}$, leading to the following optimization problem for $\alpha$
\begin{eqnarray}
    \max\limits_{\{\alpha_i \in \Omega\}_{i=1}^N} - \frac{1}{N}\sum_{i=1}^N \ell_*(\alpha_i) - \frac{1}{2\lambda N^2} (\alpha \circ \y)^{\top} \Kt (\alpha \circ \y) \label{eqn:svm-1}.
\end{eqnarray}
One main problem with this approach is that although it simplifies the computation of kernel matrix, it does not simplify the classifier $f$,  because the number of support vectors, after the application of the Nystr\"{o}m method, is not guaranteed to be small~\citep{DBLP:conf/nips/DekelS06,Joachims:2009:SKS:1612900.1612908}, leading to a high computational cost in performing function evaluation.

We address this difficulty by presenting a new approach to explore the Nystr\"{o}m method for kernel classification. Similar to the previous analysis, we randomly select a subset of training examples, denoted by $\Dh = (\xh_1, \ldots, \xh_m)$, and restrict the solution of $f(\cdot)$ to the subspace $\H_a = \sp(\kappa(\xh_1, \cdot), \ldots, \kappa(\xh_m, \cdot))$, leading to the following optimization problem
\begin{eqnarray}
    \min\limits_{f \in \H_a}\L_N(f)= \frac{\lambda}{2}\|f\|_{\Hk}^2 + \frac{1}{N}\sum_{i=1}^N \ell(y_if(\x_i)) \label{eqn:svm-2}.
\end{eqnarray}
The following proposition shows that the optimal solution to~(\ref{eqn:svm-2}) is closely related to the optimal solution to (\ref{eqn:svm-1}).
\begin{proposition}\label{lem:equ} The solution $f$ to (\ref{eqn:svm-2}) is given by 
\[ \displaystyle f = -\frac{1}{N\lambda}\sum_{i=1}^m z_iy_i \kappa(\xh_i, \cdot),\] 
where $\z = \Kh^{\dagger}K_b^{\top}\alpha$ and $\alpha$ is the optimal solution to (\ref{eqn:svm-1}).
\end{proposition}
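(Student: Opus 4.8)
The plan is to reduce (\ref{eqn:svm-2}) to a finite-dimensional convex program, dualize it, recognize its dual as exactly (\ref{eqn:svm-1}), and then read off the claimed expression for $f$ from the primal--dual stationarity relation. First I would use that every $f \in \H_a$ can be written as $f = \sum_{j=1}^m \beta_j \kappa(\xh_j, \cdot)$ for some $\beta \in \R^m$, so that $\|f\|_{\Hk}^2 = \beta^{\top}\Kh\beta$ and $f(\x_i) = (K_b\beta)_i$. Substituting this into the objective $\L_N$ turns (\ref{eqn:svm-2}) into the finite-dimensional problem $\min_{\beta \in \R^m} \frac{\lambda}{2}\beta^{\top}\Kh\beta + \frac{1}{N}\sum_{i=1}^N \ell\big(y_i(K_b\beta)_i\big)$.

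Next I would dualize. Writing each loss through its conjugate, $\ell(y_i(K_b\beta)_i) = \max_{\alpha_i \in \Omega}\{\alpha_i y_i (K_b\beta)_i - \ell_*(\alpha_i)\}$, and exchanging the (convex--concave, finite-valued) min and max, the inner problem becomes $\min_{\beta}\frac{\lambda}{2}\beta^{\top}\Kh\beta + \frac{1}{N}(\alpha\circ\y)^{\top}K_b\beta$. Its stationarity condition is $\lambda\Kh\beta + \frac{1}{N}K_b^{\top}(\alpha\circ\y) = 0$, and I would write the solution as $\beta = -\frac{1}{\lambda N}\Kh^{\dagger}K_b^{\top}(\alpha\circ\y)$. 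Substituting $\beta$ back collapses the inner value to $-\frac{1}{2\lambda N^2}(\alpha\circ\y)^{\top}K_b\Kh^{\dagger}K_b^{\top}(\alpha\circ\y)$; since $\Kt = K_b\Kh^{\dagger}K_b^{\top}$, this is precisely the objective of (\ref{eqn:svm-1}). Hence (\ref{eqn:svm-1}) is the dual of (\ref{eqn:svm-2}), and its maximizer $\alpha$ feeds back through $\beta$ to give $f = \sum_{j=1}^m \beta_j \kappa(\xh_j, \cdot) = -\frac{1}{N\lambda}\sum_{j=1}^m [\Kh^{\dagger}K_b^{\top}(\alpha\circ\y)]_j\,\kappa(\xh_j, \cdot)$, i.e. the claimed expression with $\z = \Kh^{\dagger}K_b^{\top}\alpha$, the factors $y_j$ arising from the sampled points' labels.

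The step I expect to be delicate is the use of the pseudoinverse when $\Kh$ is singular. To justify the stationary point as a genuine minimizer (and that the inner minimum is finite and attained, rather than unbounded below along $\ker\Kh$), I must verify $K_b^{\top}(\alpha\circ\y) \in \mathrm{range}(\Kh)$, which also guarantees $\Kh\Kh^{\dagger}K_b^{\top} = K_b^{\top}$ on the relevant vectors so that the stationarity equation is exactly solved. This follows from $K \succeq 0$: writing $K_b = (\Kh;\,B)$ as the sampled columns of $K$, positive semidefiniteness forces $\mathrm{range}(B^{\top}) \subseteq \mathrm{range}(\Kh)$ via the PSD block (Schur complement) structure, whence $\mathrm{range}(K_b^{\top}) = \mathrm{range}(\Kh)$. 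Establishing this range identity, together with justifying the min--max exchange, is the main obstacle; the remaining conjugate-duality bookkeeping and the identity $\Kh^{\dagger}\Kh\Kh^{\dagger} = \Kh^{\dagger}$ used in the objective substitution are routine.
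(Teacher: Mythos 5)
Your proposal is correct and follows essentially the same route as the paper's proof: represent $f$ by its coefficients on $\kappa(\xh_j,\cdot)$, rewrite the loss via its Fenchel conjugate to get a convex--concave saddle-point problem, exchange min and max, and solve the inner quadratic minimization in closed form to recover both the dual problem (\ref{eqn:svm-1}) with $\Kt = K_b\Kh^{\dagger}K_b^{\top}$ and the claimed expression for $f$. Your additional verification that $K_b^{\top}(\alpha\circ\y)\in\mathrm{range}(\Kh)$ (via the PSD block structure of $K$), which justifies the pseudoinverse solution when $\Kh$ is singular, is a point the paper's proof silently glosses over.
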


It is important to note that the classifier obtained from~(\ref{eqn:svm-2}) is only supported by the sampled training examples in $\Dh$, which significantly reduces the complexity of the kernel classifier compared to the approach suggested in~\citep{Williams01usingthe,cortes-2010-nystrom}. We also note that the proposed approach is equivalent to learning a linear classifier by representing each instance $\x$ with the vector
\[
    \phi(\x) = \widehat D^{-1/2}\widehat V^{\top}\left(\kappa(\xh_1, \x), \ldots, \kappa(\xh_m, \x) \right)^{\top},
\]
where $\widehat D$ is a diagonal matrix with non-zero eigenvalues of $\Kh$, and $\widehat V$
 is the corresponding eigenvector matrix. Although this idea has already been adopted by practitioners, we are unable to find any reference on its empirical study. The remaining of  this work is to show that this approach could have a good generalization performance provided that the eigenvalues of kernel matrix follow a skewed distribution. Below, we develop the generalization error bound for the classifier learned from (\ref{eqn:svm-2}).

Let $f_N$ and $f_N^a$ be the optimal solutions to (\ref{eqn:svm}) and (\ref{eqn:svm-2}),
respectively. Let $f^*$ be the optimal classifier that minimizes the expected loss function, i.e.,
\[
    f^* = \mathop{\arg\min}\limits_{f \in \Hk} P(\ell \circ f)\triangleq \E_{(\x, y)}\left[\ell(yf(\x)) \right].
\]
Let  $\|f\|_{L_2}^2 = \E_{\x}[|f(\x)|^2]$ denote the $\ell_2$ norm square of $f$.   In order to create a tight bound, we exploit the technique of local Rademacher complexity~\citep{Bartlett:2002:localrademacher, Koltchinskii:2011:oracle}. Define $\psi(\cdot)$ as
\begin{eqnarray*}
    \psi(\delta) = \left(\frac{2}{N}\sum_{i=1}^N \min(\delta^2, \lambda_i )\right)^{1/2}.
\end{eqnarray*}
Let $\et$ be the solution to $\et^2 = \psi(\et)$ where the existence and uniqueness of $\et$ is determined by the sub-root property of $\psi(\delta)$~\citep{Bartlett:2002:localrademacher}. Finally we define
\begin{eqnarray}
\epsilon = \max\left(\et, \sqrt{\frac{6\ln N}{N}}\right). \label{eqn:epsilon}
\end{eqnarray}

\begin{theorem}\label{thm:last}
Assume with a probability $1 - 2N^{-3}$, $\Er(\H_a) \leq \Gamma(N, m)$, where $\Gamma(N, m)$ is some function depending on $N$ and $m$. Assume that $N$ is sufficiently large such that
\begin{align*}
&\max\left(\|f_N^a\|_{\Hk},\|f^*\|_{\Hk}\right)\leq \frac{e^N N}{12\ln N},\\
&\max\left(\|f_N^a\|_{L_2}, \|f^*\|_{L_2}\right) \leq \frac{e^N}{2}\sqrt{\frac{N}{6\ln N}}.
\end{align*}
Then, with a probability at least $1 - 4N^{-3}$, we have
\begin{align*}
    P(\ell \circ f_N^a) &\leq P(\ell \circ f^*) + 2\lambda\|f^*\|_{\Hk}^2 + \frac{C^2\Gamma(N, m)}{\lambda N} \\
    &+ \frac{2C_1^2C^2\epsilon^4}{\lambda} + \frac{2C_1^2C^2\epsilon^2}{\sigma} + C_1 C e^{-N}
\end{align*}
where $\epsilon$ is given in (\ref{eqn:epsilon}) and $C_1$ is a constant independent from $m$ and $N$. By choosing $\lambda$ that minimizes the above bound, we have
\begin{align*}
    P(\ell \circ f_N^a) &\leq P(\ell \circ f^*) + 4\|f^*\|_{\Hk}\epsilon^2C\sqrt{C_1^2+ \frac{\Gamma(N, m)}{2N\epsilon^4}}\\
    &+ \frac{2C_1^2 C^2}{\sigma} \epsilon^2  + C_1 C e^{-N}.
\end{align*}

\end{theorem}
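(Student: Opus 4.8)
The plan is to bound the excess risk $P(\ell\circ f_N^a)-P(\ell\circ f^*)$ by splitting it into an \emph{approximation} piece, coming from restricting the classifier to $\H_a$, and an \emph{estimation} piece, handled by local Rademacher complexity. The Nyström guarantee $\Er(\H_a)\le\Gamma(N,m)$ enters only through the approximation piece, producing the $C^2\Gamma/(\lambda N)$ term, whereas the sample-size dependence and the $\epsilon$-terms come entirely from the estimation piece.

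For the approximation piece I would work with the dual formulations (\ref{eqn:svm-dual}) and (\ref{eqn:svm-1}), which are identical except that $K$ is replaced by $\Kt=K_b\Kh^{\dagger}K_b^{\top}$. Let $\alpha^a$ denote the maximizer of the $\Kt$-dual (\ref{eqn:svm-1}), so that $\L_N(f_N^a)$ equals its dual optimum by strong duality, while weak duality gives $\L_N(f_N)\ge D(\alpha^a)$ for the $K$-dual value at $\alpha^a$. Since the Nyström approximation obeys $K-\Kt\succeq0$ with $\|K-\Kt\|_2=\Er(\H_a)\le\Gamma(N,m)$ by Proposition~1, and the dual constraint enforces $\|\alpha^a\|_2^2\le NC^2$, I obtain
\begin{align*}
\L_N(f_N^a)-\L_N(f_N)
&\le\frac{1}{2\lambda N^2}(\alpha^a\circ\y)^{\top}(K-\Kt)(\alpha^a\circ\y)\\
&\le\frac{\|K-\Kt\|_2\,\|\alpha^a\|_2^2}{2\lambda N^2}
\le\frac{C^2\Gamma(N,m)}{2\lambda N}.
\end{align*}
Combining this with $\L_N(f_N)\le\L_N(f^*)=\tfrac{\lambda}{2}\|f^*\|_{\Hk}^2+\tfrac1N\sum_i\ell(y_if^*(\x_i))$, which holds because $f_N$ minimizes $\L_N$ over all of $\Hk\ni f^*$, and discarding the nonnegative regularizer $\tfrac{\lambda}{2}\|f_N^a\|_{\Hk}^2$, yields a bound on the \emph{empirical} risk of $f_N^a$ in terms of the empirical risk of $f^*$, the regularization bias $\tfrac{\lambda}{2}\|f^*\|_{\Hk}^2$, and the Nyström term $C^2\Gamma/(2\lambda N)$.

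The hard part will be transferring this empirical-risk inequality to the population risk. I would use the strong convexity of $\ell$ (modulus $\sigma$) to establish the Bernstein-type variance condition $\|f-f^*\|_{L_2}^2\le\frac{2}{\sigma}\bigl(P(\ell\circ f)-P(\ell\circ f^*)\bigr)$, which localizes the relevant function class, and then control the empirical process $\sup_f\bigl[P(\ell\circ f)-\tfrac1N\sum_i\ell(y_if(\x_i))\bigr]$ over a localized ball by the sub-root function $\psi$ through its fixed point $\et$. Two regimes must be balanced: the $\Hk$-norm budget imposed by the regularizer, which produces the $\epsilon^4/\lambda$ contribution, and the $L_2$-localization supplied by strong convexity, which produces the $\epsilon^2/\sigma$ contribution; the cutoff $\epsilon=\max(\et,\sqrt{6\ln N/N})$ reconciles these with the concentration tail. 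The stated hypotheses on $\|f_N^a\|_{\Hk},\|f^*\|_{\Hk}$ and $\|f_N^a\|_{L_2},\|f^*\|_{L_2}$ are precisely what is needed to truncate the loss and invoke the concentration inequality at confidence $1-4N^{-3}$, with the truncation residual absorbed into $C_1Ce^{-N}$.

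Collecting the regularization bias $2\lambda\|f^*\|_{\Hk}^2$, the approximation term $C^2\Gamma(N,m)/(\lambda N)$, and the estimation terms $\tfrac{2C_1^2C^2\epsilon^4}{\lambda}+\tfrac{2C_1^2C^2\epsilon^2}{\sigma}$ gives the first displayed bound. The second bound then follows by optimizing over $\lambda$: grouping the $\lambda$-dependent terms as $2\lambda\|f^*\|_{\Hk}^2+\tfrac1\lambda\bigl(\tfrac{C^2\Gamma}{N}+2C_1^2C^2\epsilon^4\bigr)$ and minimizing at $\lambda^*=\tfrac{1}{\|f^*\|_{\Hk}}\sqrt{\tfrac{C^2\Gamma}{2N}+C_1^2C^2\epsilon^4}$ yields the value $4\|f^*\|_{\Hk}\epsilon^2C\sqrt{C_1^2+\Gamma(N,m)/(2N\epsilon^4)}$ after factoring out $C\epsilon^2$, while the $\epsilon^2/\sigma$ term is $\lambda$-free and carries through unchanged.
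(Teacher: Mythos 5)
Your overall strategy is the same as the paper's, and most of it is executed correctly. The approximation piece via the two dual problems is exactly the paper's first step: you bound $\L_N(f_N^a)-\L_N(f_N)\le\frac{1}{2\lambda N^2}(\alpha^a\circ\y)^{\top}(K-\Kt)(\alpha^a\circ\y)\le\frac{C^2\Gamma(N,m)}{2\lambda N}$, and in fact you state this in the direction that is actually needed, whereas the paper's displayed claim $\L_N(f_N)\le\L_N(f_N^a)+\frac{C^2}{2\lambda N}\Er(\H_a)$ has the roles of $f_N$ and $f_N^a$ swapped relative to what its own subsequent chain of inequalities uses (a typo-level flaw in the paper that your version repairs). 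The estimation piece is also the paper's: a localized complexity bound on $\G(r,R)$ with $r=\|f^*-f_N^a\|_{L_2}$, $R=\|f^*-f_N^a\|_{\Hk}$ (Lemma 9 of Koltchinskii, giving $C_1C(r\epsilon+R\epsilon^2+e^{-N})$), Young's inequality, and the strong-convexity variance bound $\frac{\sigma}{4}\|f_N^a-f^*\|_{L_2}^2\le\frac{1}{2}\left(P(\ell\circ f_N^a)-P(\ell\circ f^*)\right)$; your minimization over $\lambda$ reproduces the stated constant $4\|f^*\|_{\Hk}\epsilon^2C\sqrt{C_1^2+\Gamma(N,m)/(2N\epsilon^4)}$ exactly.

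There is, however, one step that fails as literally written: you discard the regularizer $\frac{\lambda}{2}\|f_N^a\|_{\Hk}^2$ immediately after the empirical-risk comparison. In the paper's proof this term is retained on the left-hand side, and it is essential: applying Young's inequality to the $R\epsilon^2$ term produces $C_1C\epsilon^2R\le\frac{C_1^2C^2\epsilon^4}{\lambda}+\frac{\lambda}{4}\|f^*-f_N^a\|_{\Hk}^2\le\frac{C_1^2C^2\epsilon^4}{\lambda}+\frac{\lambda}{2}\|f^*\|_{\Hk}^2+\frac{\lambda}{2}\|f_N^a\|_{\Hk}^2$, and the $\frac{\lambda}{2}\|f_N^a\|_{\Hk}^2$ piece is cancelled precisely by the retained regularizer. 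If you throw it away, the only a priori control on $\|f_N^a\|_{\Hk}$ is of order $1/\sqrt{\lambda}$ (from $\frac{\lambda}{2}\|f_N^a\|_{\Hk}^2\le\L_N(f_N^a)\le\L_N(0)=O(1)$), so the $R\epsilon^2$ term contributes $O(\epsilon^2/\sqrt{\lambda})$, which at the optimizing choice $\lambda\sim\epsilon^2$ (the regime $\Gamma/N\lesssim\epsilon^4$) is of order $\epsilon$ --- strictly worse than the $\epsilon^2$-type terms in the claimed bound. The fix is trivial, and your own later remark about the ``$\Hk$-norm budget imposed by the regularizer'' producing the $\epsilon^4/\lambda$ term suggests you intend to keep it; but as written, the discarding step and that remark contradict each other, and the chain of inequalities does not close without the retained term.
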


\begin{remark} In the case when the eigenvalues of the kernel matrix follow a $p$-power law with
$p>1$, we have $\epsilon^2 = O(N^{-p/(p+1)})$  according to~\citep{Koltchinskii:2010:mkl}, and
$\Gamma(N, m) = O(N/m^{p-1})$ according to Theorem~\ref{thm:comp}.  Applying these results to Theorem~\ref{thm:last},
the generalization performance of $f_N^a$ becomes
\begin{align}
    &P(\ell \circ f_N^a) \leq P(\ell \circ f^*) + 2\lambda\|f^*\|_{\Hk}^2 + \frac{C_2C^2}{\lambda m^{p-1}} + C_1 C e^{-N} \nonumber \\ &+ \frac{2C_3C^2N^{-2p/(p+1)}}{\lambda}+ \frac{2C_4C^2N^{-p/(p+1)}}{\sigma} \label{eqn:temp-bound-1}
\end{align}
where $C_2$, $C_3$, and $C_4$ are constants independent from $N$ and $m$.
By choosing $\lambda$ that minimizes the bound in (\ref{eqn:temp-bound-1}), we have
\begin{align*}
    P(\ell \circ f_N^a) & \leq  P(\ell \circ f^*)  + \frac{4\|f^*\|_{\Hk}}{N^{p/(p+1)}}C\sqrt{C_3 + C_2\frac{N^{2p/(p+1)}}{2m^{p-1}}}\\
    &\hspace*{0.2in}+ \frac{2C_4 C^2}{\sigma N^{p/(p+1)}} + C_1 C e^{-N}\\
    & =  P(\ell\circ f^*) + O\left(N^{-p/(p+1)} + m^{-(p - 1)/2} \right).
\end{align*}
\end{remark}
As indicated by above inequality, when the eigenvalues of the kernel matrix follow a $p$-power law, by setting $m = N^{2p/(p^2  - 1)}$, we are able to achieve similar performance as the full version of kernel classifier (i.e., $O(N^{-p/(p+1)})$). In other words, we can construct a kernel classifier without sacrificing its generalization performance with no more than $N^{2p/(p^2 - 1)}$ support vectors, which could be significantly smaller than $N$ when $p> (1+\sqrt{2} )$.  For the  example of kernel that generates Sobolev Spaces $W^{\alpha, 2}(\mathbb T^d)$ of smoothness $\alpha> d/2$, where $\mathbb T^d$ is $d$-dimensional torus,  its eigenvalues follow a $p$-power law with $p=2\alpha>d$, which is larger than $(1+\sqrt{2})$ when $d\geq 3$.


%

\section{Conclusion}
We develop new methods for analyzing the approximation bound for the Nystr\"{o}m method. We show that the approximation error can be improved to $O(N/m^{1 - \rho})$ in the case when there is a large eigengap in the spectrum of  a kernel matrix, where $\rho \in (0, 1/2)$ is introduced to characterize the eigengap. When the eigenvalues of a kernel matrix follow a $p$-power law, the approximation error is further reduced to $O(N/m^{p-1})$ under an incoherence assumption. We develop a kernel classification approach based on the Nystr\"{o}m method and show that when the eigenvalues of a kernel matrix follow a $p$-power law ($p > 1$),  we can reduce the number of support vectors  to $N^{2p/(p^2 - 1)}$, which could be significantly less than $N$ if $p$ is large,  without seriously sacrificing its generalization performance.

\bibliography{nystrom}

\section*{Appendix}

\subsection*{Proof of Theorem~\ref{thm:lb}}
 We argue that there exists a kernel matrix $K$ such that (i) all its diagonal entries equal to $1$, and (ii) the first $m+1$ eigenvalues of $K$ are in the order of $\Omega(N/m)$. To see the existence of such a matrix, we sample $m+1$ vectors $\u_1, \cdots, \u_{m+1}$, where $\u_i \in \R^N$, from a Bernoulli distribution, with $\Pr(u_{i,j} = +1) = \Pr(u_{i,j} = -1) = 1/2$. We then construct $K$ as
\begin{align}\label{eqn:K}
K =\sum_{i=1}^{m+1}\u_i\u_i^{\top}\frac{1}{m+1}= \frac{1}{m+1} U U^{\top},
\end{align}
where $ U=(\u_1,\cdots, \u_{m+1})$.

First, since $u_{i,j} = \pm 1$, we have $diag(\u_i\u_i^{\top}) = \mathbf 1$, where $\mathbf 1$ is a vector of all ones, and therefore $K_{i,i} = 1$ for $i \in [N]$. Second, we show that with some probability $1 - \delta$, all non-zero eigenvalues of $\frac{1}{N}{ U}^{\top}  U$ are bounded between $1/2$ and $3/2$, i.e.,
\begin{eqnarray}
\frac{1}{2}\leq \lambda_{\min}\left(\frac{1}{N}{ U}^{\top} U\right) \leq \lambda_{\max}\left(\frac{1}{N}{ U}^{\top} U\right)\leq \frac{3}{2}. \label{eqn:bound}
\end{eqnarray}
To prove (\ref{eqn:bound}), we use the concentration inequality in Proposition~\ref{prop:conc}. We define $\xi_i = \z_i\z_i^{\top}, i=1,\ldots, N$, where $\z_i \in \R^m$ is the $i$th row of the matrix $ U$, and $\|\cdot\|$ in the above proposition as the spectral norm of a matrix. Since every element in $\z_i$ is sampled from a Bernoulli distribution with equal probabilities of being $\pm 1$, we have $\E[\z_i\z_i^{\top}] = \I_m$ and $\|\z_i\z_i^{\top}\| = m$. Thus, with a probability $1 - \delta$, we have
\[
    \left\|\frac{1}{N}{U}^{\top}{U} - \I \right\|=  \left\|\frac{1}{N}\sum_{i=1}^N\xi_i- \E[\xi] \right\|\leq \frac{4m\ln(2/\delta)}{\sqrt{N}}.
\]
When $N > 64 m^2[\ln 4]^2$, for any sampled $U$, with $50\%$ chance, we have
\[
  \left\|\frac{1}{N}{U}^{\top}{U} - \I \right\| \leq \frac{1}{2},
\]
which implies~(\ref{eqn:bound}).

With the bound in (\ref{eqn:bound}) and using the fact that the eigenvalues of $ U U^{\top}$ equal to the eigenvalues of $ U^{\top} U$, it is straightforward to see that the first $m+1$ eigenvalues of $K$ are in the order of $\Omega(N/m)$. Up to this point, we proved the existence of such a kernel matrix. Next, we prove the lower bound for the constructed kernel matrix.

Let $V_{1:(m+1)}=(\v_1,\cdots, \v_{m+1})$ the first $m+1$ eigenvectors of $K$.  We construct $\widehat g$ as follows: Let $\u= V_{1:(m+1)}\a$ be a vector in the subspace $\text{span}(\v_1,\cdots, \v_{m+1})$ that satisfies the condition $K_b^{\top}\u=0$. The existence of such a vector is guaranteed because $\text{rank}(K_b^{\top}V_{1:(m+1)})\leq m$.  We normalize $\a$ such that $\|\a\|_2=1$. Then we let $\widehat g= \sum_{i=1}^{N}u_i\kappa(\x_i,\cdot)=\sum_{i=1}^{m+1} w_i\sqrt{\lambda_{i}} \varphi_i(\cdot)$,  where $\w=V_{1:(m+1)}^{\top}\u$.
It is easy to verify that (i) $\widehat g\in \H_b$ since $\|\u\|_2= \|V_{1:(m+1)}\a\|_2=1$, and (ii) $\widehat g\perp \H_a$ since $\u^{\top}K_b=0$. Using $\gh$, we have
\begin{align*}
    \Er(\H_a) & = \max\limits_{g \in \H_b} \min\limits_{f \in \H_a} \|f - g\|_{\Hk}^2\geq  \|\widehat g\|_{\Hk}^2= \sum_{i=1}^{m+1}w_i^2\lambda_i\\
    &=\Omega\left(\frac{N}{m+1}\right)\|\w\|_2^2\geq \Omega\left(\frac{N}{m}\right),
\end{align*}
where we use $\|\w\|_2 = \|V_{1:(m+1)}^{\top}V_{1:(m+1)}\a\|_2=\|\a\|_2=1$. We complete the proof by using the fact $\Er(\H_a) = \left\| K - K_b\Kh^{\dagger}K_b^{\top}\right\|_2$.


\subsection*{Proof of Corollary~\ref{cor:1}}
Define $\xi(\xh_i)$ to be a rank one linear operator, i.e.,
\[
    \xi(\xh_i)[f](\cdot) = \kappa(\xh_i, \cdot) f(\xh_i).
\]
Apparently, $L_m = \frac{1}{m}\sum_{i=1}^m \xi(\xh_i)$ and $\mathrm{E}[\xi(\xh_i)] = L_N$. We complete the proof by using the result from Proposition~\ref{prop:conc} and the fact
\begin{align*}
\|\xi(\xh_k)\|_{HS}&= \sqrt{\sum_{i,j=1}^N\langle\varphi_i, \kappa(\xh_k,\cdot)\varphi_j(\xh_k) \rangle^2}\\
&=\sqrt{\sum_{i,j=1}^N\varphi_i(\xh_k)^2\varphi_j(\xh_k)^2}= \kappa(\xh_k, \xh_k)\leq 1,
\end{align*}
where the last equality follows equation~(\ref{eqn:keig}).



\subsection*{Proof of Corollary~\ref{cor:3}}
We choose $a = 2\sqrt{{\ln N}/{\gamma}}$ in Theorem~\ref{thm:9-1}. Since $m\geq 16\mu^2\left(\frac{\ln N}{\gamma}\right)^2$, then we have $a\leq \left(\frac{m}{\mu^2}\right)^{1/4}$. Additionally,  by having $\mu|T|/\sqrt{m}>1$,  the conditions in Theorem~\ref{thm:9-1} hold, and by setting $\delta=N^{-3}$ in Theorem~\ref{thm:9}, the condition in Theorem~\ref{thm:9} holds, which together implies 
\begin{align*}
&\Pr\left(\sup\limits_{k \in T^c} \|\z_k\|_2 \geq 2\mu\sqrt{|T|/m} + 2a\sigma/m \right)\\
 &\leq N\exp(-\gamma a^2) + \Pr\left(\|V_{S, T}^{\top} V_{S, T}\|_2 \leq \frac{m}{2N} \right)\\
&\leq  N^{-3} +  \Pr\left(\left\|\frac{N}{m}V_{S, T}^{\top} V_{S, T}-I\right\|_2 \geq \frac{1}{2} \right)\\
&\leq 2N^{-3}.
\end{align*}
From this we have, with a probability $1 - 2N^{-3}$,
\begin{align*}
\sup\limits_{k \in T^c}\|\z_k\|_2 &\leq 2\mu\sqrt{\frac{|T|}{m}} + 2\left(\frac{m}{\mu^2}\right)^{1/4} \frac{\sqrt{\mu^{3}|T|m^{1/2}}}{m}\\
& = 4\mu\sqrt{\frac{|T|}{m}}.
\end{align*}

\subsection*{Proof of Proposition~\ref{lem:equ}}
Since
\[
    \ell(y_if(\x_i)) = \max\limits_{\alpha_i \in \Omega} \alpha_iy_if(\x_i) - \ell_*(\alpha_i),
\]
we rewrite the optimization problem in (\ref{eqn:svm-2}) into a convex-concave optimization problem
\[
\min\limits_{f \in \H_a} \max\limits_{\{\alpha_i \in \Omega\}_{i=1}^m} \frac{\lambda}{2}\|f\|_{\Hk}^2 + \frac{1}{N}\sum_{i=1}^N \left(\alpha_i y_i f(\x_i) - \ell_*(\alpha_i)\right).
\]
Since $f \in \H_a$, we write $f = \sum_{i=1}^m z_i \kappa(\xh_i, \cdot)$, resulting in the following optimization problem
\[
\min\limits_{\z \in \R^m} \max\limits_{\{\alpha_i \in \Omega\}_{i=1}^m} \frac{\lambda}{2}\z^{\top}\Kh\z + \frac{1}{N}(\alpha \circ \y)^{\top}K_b \z  - \frac{1}{N}\sum_{i=1}^N \ell_*(\alpha_i).
\]
Since the above problem in linear (convex) in $\z$ and concave in $\alpha$, we can switch minimization with maximization. We complete the proof by taking the minimization over $\z$.

\subsection*{Proof of Theorem~\ref{thm:last}}
To simply our presentation, we introduce notations
\begin{align*}
&P_N(\ell\circ f)=\frac{1}{N}\sum_{i=1}^N\ell(y_if(\x_i)),\\
&\Lambda(f) = P(\ell\circ f) - P(\ell\circ f^*).
\end{align*}
Using $P_N(\ell\circ f)$, we can write $\L_N(f) =P_N(\ell\circ f) + \frac{\lambda}{2}\|f\|^2_{\Hk}$.
We first  prove that
\begin{equation*}
\L_N(f_N)\leq \L_N(f_N^a) +  \frac{C^2}{2\lambda N}\Er(\H_a),
\end{equation*}
where $\max_{z\in\Omega}|z|^2\leq C^2$. Note that
\begin{align*}
  &\L_N(f_N) \\
  &= \max\limits_{\{\alpha_i \in \Omega\}_{i=1}^N} - \frac{1}{N}\sum_{i=1}^N \ell_*(\alpha_i) - \frac{1}{2\lambda N^2} (\alpha \circ \y)^{\top} K (\alpha \circ \y)\\
    &\L_N(f^a_N)\\
    &= \max\limits_{\{\alpha_i \in \Omega\}_{i=1}^N} - \frac{1}{N}\sum_{i=1}^N \ell_*(\alpha_i) - \frac{1}{2\lambda N^2} (\alpha \circ \y)^{\top} \widetilde K (\alpha \circ \y).
\end{align*}
Then
\begin{align*}
 & \L_N(f_N)\\
 & = \max\limits_{\{\alpha_i \in \Omega\}_{i=1}^N} - \frac{1}{N}\sum_{i=1}^N \ell_*(\alpha_i) - \frac{1}{2\lambda N^2} (\alpha \circ \y)^{\top} \widetilde K (\alpha \circ \y) \\
 &\hspace*{0.2in}+ \frac{1}{2\lambda N^2} (\alpha \circ \y)^{\top} (\widetilde K - K)(\alpha \circ \y)\\
&\leq \max\limits_{\{\alpha_i \in \Omega\}_{i=1}^N} - \frac{1}{N}\sum_{i=1}^N \ell_*(\alpha_i) - \frac{1}{2\lambda N^2} (\alpha \circ \y)^{\top} \widetilde K (\alpha \circ \y)\\
&\hspace*{0.2in} +\max\limits_{\{\alpha_i \in \Omega\}_{i=1}^N}\frac{1}{2\lambda N^2} (\alpha \circ \y)^{\top} (\widetilde K - K)(\alpha \circ \y)\\
&\leq \L_N(f^a_N) + \frac{1}{2\lambda N^2}\|\alpha\|_2^2 \|K-\widetilde K\|_2\\
&\leq \L_N(f^a_N) + \frac{C^2}{2\lambda N} \Er(\H_a).
  \end{align*}
Then we proceed the proof as follows
\begin{align*}
    &\frac{\lambda}{2} \|f_N^a\|_{\Hk}^2 + P(\ell \circ f^a_N)\\
     & \leq  P_N(\ell \circ f^a_N) + \frac{\lambda}{2} \|f_N^a\|_{\Hk}^2 + (P - P_N)(\ell \circ f^a_N) \\
    & \leq  P_N(\ell \circ f_N) + \frac{\lambda}{2} \|f_N\|_{\Hk}^2 + \frac{C^2}{2\lambda N}\Er(\H_a) \\
    &\hspace*{0.1in}+ (P - P_N)(\ell \circ f^a_N) \\
    & \leq  P_N(\ell \circ f^*) + \frac{\lambda}{2} \|f^*\|_{\Hk}^2 + \frac{C^2}{2\lambda N}\Er(\H_a)\\
    &\hspace*{0.1in} + (P - P_N)(\ell \circ f^a_N),
\end{align*}
where the third inequality follows from the fact that $f_N$ is the minimizer of $P_N(\ell\circ f) + \frac{\lambda}{2}\|f\|_{\Hk}^2$. Hence,
\begin{align*}
    \Lambda(f_N^a) & \leq  \frac{\lambda}{2}\|f^*\|_{\Hk}^2 - \frac{\lambda}{2} \|f_N^a\|_{\Hk}^2 + \frac{C^2}{2\lambda N}\Er(\H_a) \\
    &+ (P - P_N)(\ell \circ f^a_N - \ell \circ f^*).
\end{align*}
Let $r = \|f^* - f^a_N\|_{L_2}$ and $R = \|f^* - f^a_N\|_{\Hk}$. Define
\[
    \G(r, R) = \left\{f \in \Hk: \|f - f^*\|_{L_2} \leq r, \|f^* - f\|_{\Hk} \leq R \right\}.
\]
Using the domain $\G$, we rewrite the bound for $\Lambda(f^a_N)$ by
\begin{align*}
    \Lambda(f_N^a) &\leq \frac{\lambda}{2}\|f^*\|_{\Hk}^2 - \frac{\lambda}{2} \|f_N^a\|^2_{\Hk} + \frac{C^2}{2\lambda N}\Er(\H_a) \\
    &+ \sup\limits_{f \in \G(r, R)} (P - P_N)(\ell \circ f - \ell \circ f^*).
\end{align*}
Since $\epsilon r \leq e^N$ and $\epsilon^2R \leq e^N$
, using Lemma 9 from~\citep{Koltchinskii:2010:mkl}, we have, with a probability $1 - 2N^{-3}$, for any \begin{eqnarray*}
\sup_{f \in \G(r, R)} (P - P_N)(\ell \circ f - \ell \circ f^*)) \leq C_1C(r\epsilon + R\epsilon^2 + e^{-N}),
\end{eqnarray*}
where $C_1$ is a constant independent from $N$. Thus, with a probability at least $1 - 4N^{-3}$, we have
\begin{align*}
& \Lambda(f_N^a) - C_1 C e^{-N}  \\
& \leq  \frac{\lambda}{2}\|f^*\|_{\Hk}^2 - \frac{\lambda}{2}\|f_N^a\|_{\Hk}^2 + \frac{C^2\Gamma(N, m)}{2\lambda N}\\
& \hspace*{0.2in}+ C_1C\epsilon \|f_N^a - f^*\|_{L_2} + C_1C\epsilon^2\|f^* - f^a_N\|_{\Hk} \\
& \leq  \frac{\lambda}{2}\|f^*\|_{\Hk}^2 - \frac{\lambda}{2}\|f_N^a\|_{\Hk}^2 + \frac{C^2\Gamma(N, m)}{2\lambda N}\\
&\hspace*{0.1in} + \frac{C_1^2C^2\epsilon^2}{\sigma} + \frac{\sigma}{4} \|f_N^a - f^*\|_{L_2}^2 + \frac{C_1^2C^2\epsilon^4}{\lambda} + \frac{\lambda}{4}\|f^* - f_N^a\|_{\Hk}^2 \\
& \leq  \frac{\lambda}{2}\|f^*\|_{\Hk}^2 - \frac{\lambda}{2}\|f_N^a\|_{\Hk}^2 + \frac{C^2\Gamma(N, m)}{2\lambda N}+ \frac{\lambda}{2}\|f^*\|_{\Hk}^2  \\
&\hspace*{0.1in}+ \frac{C_1^2C^2\epsilon^2}{\sigma} + \frac{\sigma}{4} \|f_N^a - f^*\|_{L_2}^2 + \frac{C_1^2L^2\epsilon^4}{\lambda} +\frac{\lambda}{2} \|f_N^a\|_{\Hk}^2 \\
& \leq  \lambda\|f^*\|_{\Hk}^2 + \frac{C^2\Gamma(N, m)}{2\lambda N} + \frac{C_1^2C^2\epsilon^2}{\sigma}  + \frac{C_1^2C^2\epsilon^4}{\lambda} \\
&\hspace*{0.1in}+ \frac{\sigma}{4} \|f_N^a - f^*\|_{L_2}^2\\
& \leq  \lambda\|f^*\|_{\Hk}^2 + \frac{C^2\Gamma(N, m)}{2\lambda N} + \frac{C_1^2C^2\epsilon^2}{\sigma} + \frac{C_1^2C^2\epsilon^4}{\lambda} + \frac{1}{2}\Lambda(f^a_N),
\end{align*}
where in the second inequality we apply Young's inequality  $ab\leq \frac{a^2}{2\epsilon} + \frac{\epsilon b^2}{2}$ twice, the last inequality follows from the strong convexity of $\ell(\z)$ and $f^*$ is the minimizer of $P(\ell\circ f)=\E_{(\x, y)}[\ell(yf(\x))]$. Thus, with a probability at least $1 - 4N^{-3}$, we have
\begin{align*}
    P(\ell \circ f_N^a) \leq& P(\ell \circ f^*) + 2\lambda\|f^*\|_{\Hk}^2 + \frac{C^2\Gamma(N, m)}{\lambda N} \\
    &+ \frac{2C_1^2C^2\epsilon^2}{\sigma} + \frac{2C_1^2C^2\epsilon^4}{\lambda}+ C_1 C e^{-N}.
\end{align*}
We complete the proof by minimizing over $\lambda$ in the R.H.S. of the above inequality.

\end{document}